\begin{document}

%\title{MOLA: Mismatch Of Loss and Accuracy}
%\title{UCLA: UnCouple the Loss and Accuracy}
%\title{LAM: PAC-style bounds over the Loss and Accuracy Mismatch}
\title{PACMAN: PAC-style bounds accounting for the Mismatch \\ between Accuracy and Negative log-loss}
%PARA MI LA LOSS ES NEG LOG-LOSS. LO QUE ES CROSS ENTROPY ES EL RIESGO. APARTE QUEDA LINDO QUE SEA PACMAN.
%\title{PACMAN: PAC-style bounds over the Mismatch of Accuracy and Negative log-loss}

%\title{PACMAN: PAC-Bayes Analysis Beyond Matched Losses}

%\shorttitle{Generalization for Cross-Entropy training} %%%for recto running head
\shorttitle{PACMAN}
\shortauthorlist{Vera, Rey Vega and Piantanida} %%% for verso running head

\author{{%%%% First author details
\sc Matias Vera}$^*$,\\[2pt]
CSC-CONICET and Universidad de Buenos Aires, Argentina\\
$^*${\email{mvera@fi.uba.ar}}\\[2pt]
%%%%%%% Second author details
{\sc Leonardo Rey Vega}\\[2pt]
CSC-CONICET and Universidad de Buenos Aires, Argentina\\
{lrey@fi.uba.ar}\\[6pt]
%%%%%%%
{\sc and}\\[6pt]
%%%%%%% Third author details
{\sc Pablo Piantanida} \\[2pt]
Laboratoire des Signaux et Systèmes (L2S)\\ Université Paris-Saclay CNRS CentraleSupélec, Gif-sur-Yvette, France\\
{pablo.piantanida@centralesupelec.fr}}

\maketitle

\begin{abstract}
{The ultimate performance of machine learning algorithms for classification tasks is  usually measured in terms of the empirical error probability (or accuracy) based on a testing dataset. Whereas, these algorithms are optimized through the minimization of a typically different--more convenient--loss function based on a training set. For classification tasks, this loss function is often the negative log-loss that leads to the well-known cross-entropy risk which is typically better behaved (from a numerical perspective) than the error probability. Conventional studies on the generalization error do not usually take into account the underlying mismatch between losses at training and testing phases. In this work, we introduce an analysis  based on point-wise PAC approach over the generalization gap considering the mismatch of testing based on the accuracy metric and training on the negative log-loss. We label this analysis PACMAN. Building on the fact that the mentioned mismatch can be written as a likelihood ratio,  concentration inequalities can be used to provide some insights for the generalization problem in terms of some point-wise PAC bounds depending  on some meaningful information-theoretic quantities. An analysis of the obtained bounds and a comparison with available results in the literature are also provided.}
{Generalization, Point-wise PAC-bounds, Log-loss, Accuracy, Likelihood Ratio.}
%%%% If classification number provided then
\\
2000 Math Subject Classification: 34K30, 35K57, 35Q80,  92D25
\end{abstract}
%\textcolor{red}{Editar abstract al final.}

\section{Introduction}

%Several classification algorithms, e.g. deep learning, are trained using the popular negative log-loss function \cite{Goodfellow-et-al-2016}.
%{\color{red}The abstract should be no more than 300 words. Authors are encouraged to limit their papers to fewer than 25 pages to ensure a rapid review process. An article less than 25 pages in length will receive a decision within 90 days, longer articles are acceptable but decisions may take longer.}

Generalization abilities of an algorithm is a fundamentally important area in the machine learning~\cite{vapnik98statlearn}, \cite{Devroye97a}. The use of the empirical risk minimization (ERM) principle jointly with a well-chosen risk or loss function, requires that the learned models are able to work well for unseen examples dudring training, i.e. not present in the training dataset. In other words, we require the model is able to ``truly learn'' as opposed to ``simply memorize'' the training data. 

Several theoretical guarantees have been proposed to quantify how close the empirical risk is to the actual  risk. Most of them assume that the loss function is bounded,  as it is the case for the misclassification error. Some well-known results include: uniform convergence bounds \cite{Vapnik1971uniform}, PAC bounds \cite{Valiant:1984:TL:1968.1972} and PAC Bayesian bounds \cite{McAllester98somepac-bayesian}. In recent years it is more frequent to study deviations from the dataset and the hypothesis set simultaneously. These bounds consider that the training algorithm outputs a posterior distribution of the hypothesis given the training dataset. Some of these results consider averaged bounded risks over randomized classifiers with respect to the learned posterior distribution. Although typical PAC-bayesian bounds give valuable theoretical insights, some guarantees are needed for the actual risk of the trained model, and  not only for an average risk over randomized classifiers. It is for this reason that recently there was some interest in \emph{derandomized} bayesian bounds which are also known as point-wise PAC bounds~\cite{pmlr-v83-bassily18a,Esposito2020RobustGV,hetao,DBLP:journals/corr/abs-2102-08649}, among others. 

Most of the literature regarding the analysis of generalization gap does not distinguish between the loss function used to evaluate the true risk and that used at training time to perform ERM. Although both metrics are usually the same in regression problems, this does not happen in classification tasks. It is widely accepted that the negative log-loss function is a natural choice for the loss function in classification problems~\cite{Bishop_2007} where training is performed based on ERM. This metric allows to model the posterior distribution of the labels given the inputs and with an appropriate parametrization it leads to sufficiently well-behaved models that can be numerically optimized using techniques such as backpropagation \cite{Goodfellow-et-al-2016}. Despite of the benefits of this metric, most authors agree that the ultimate goal of any classification algorithm is to achieve a high accuracy or low classification error probability for examples not present in the training set. It is well known that negative log-loss and accuracy are not totally aligned, since not even their optimal solutions match \cite{guo17}.

However, most theoretical studies (as the ones cited above) focus on investigating generalization with respect to a only one metric, typically the classification error probability (or its complement, namely the accuracy). Whereas one of the main goals of most of the literature is to study how the models obtained through the minimization of the empirical error probability can achieve good generalization ability, the mismatch between what is typically done in practice (minimizing empirical negative log-loss at training) and what is the ultimate goal (minimizing classification error probability at testing) has not received sufficient attention. 

\subsection{Our Contributions}
The main contributions of this manuscript can be summarized as:
\begin{itemize}
    \item We present the PACMAN analysis: a novel point-wise PAC generalization study motivated by the above mismatch between the accuracy and the negative log-loss. Our analysis is different to the ones in the papers cited above. In first place, as we have already mentioned, the loss used at training is not the same that the one used at testing. Secondly, the negative log-loss is not bounded a function in general. Most of the results mentioned above depends on the use of bounded loss functions. Our analysis bypasses these issues observing that the considered generalization metric can be expressed as likelihood ratio between two distributions (one of them being the data distribution).
    \item Using simple and well-known concentration inequalities some non-trivial bounds of this likelihood ratio can be obtained. The obtained bounds shows the importance of information-theoretic metrics related with the R\'enyi $\alpha$-divergence \cite{erven14}, which is also the case for some of the more classical generalization results cited above that do not consider the mismatch between the loss at training and testing (Theorem \ref{thm:bayes} and Corollary \ref{cor:1}).
    \item We obtain a bound dominated by the mutual information between the training dataset and the chosen hypothesis, which is a known result for the classical generalization metric. Our PACMAN analysis supports the very well-known conclusion that learners which share little information generalizes well, but without restricting it to a $0-1$ \cite{pmlr-v83-bassily18a} or subgaussian loss \cite{Xu_Raginsky_2017} (Theorem \ref{thm:little} and Lemma \ref{lem:esperanzaesinformacion}).
    \item As the PACMAN generalization metric is a likelihood ratio, we also develop a novel bound based on a Neyman-Pearson decision problem (Theorem \ref{thm:ratio}). We are not aware of other similar result reported in the bibliography and for that reason we study this bound under three different assumptions: I. Finite hypothesis space (Ex. \ref{cor:finito}), II. Subgaussian hypothesis on the generalization metric (Ex. \ref{ex:subgaussian}) and III. Regular models (Ex. \ref{sec:regulairtyconditions}).
\end{itemize}

%In this work, we present some PACMAN bounds based on this kind of metric and we compare them with they PAC counterpart. 
The rest of the paper is organized as follows. In Section \ref{sec:definiciones}, we define the general setting and recall some basics on generalization bounds along with some relevant results from the bibliography. In Section \ref{sec:pacmanintro} we introduce our analysis and we explain how this approach can be compared with the typical PAC point-wise framework. Several PAC-style bounds based on information metrics are presented in Section \ref{sec:mainresults}. Finally, the summary and conclusions are presented in Section \ref{sec:conclution}. Some supplementary material and discussions are relegated to the appendix. 

\subsection{Notation and conventions}

Probability measures and the expectation operator are denoted with the symbols $\mathbb{P}$ and $\mathbb{E}$ respectively. Boldsymbols denote vectors $\mathbf{v}$, where $(\mathbf{v})_i$ refers to the i-th element of vector $\mathbf{v}$. Information-theoretic metrics such as Kullback Leibler divergence, R\'enyi $\alpha$-divergence, square-chi divergence, Hellinger distance, total variation, mutual information and Sibson $\alpha$-mutual information and conditional entropy are defined as\footnote{In this work, we will typically assume the existence of probability density functions (pdf) with respect to some unspecified dominating measure. The pdf of random variable $X$ which takes values on space $\mathcal{X}$ will be denoted as $p(x)$, $x\in\mathcal{X}$.  This assumption, which do not severely restrict the results, is done in order to simplify the presentation of the results.}:
\begin{align}
\text{KL}(p(x)\|q(x))&=\mathop{\mathbb{E}}_{x\sim p(x)}\left[\log\frac{p(x)}{q(x)}\right],\\
D_\alpha(p(x)\|q(x))&=\left\{\begin{array}{cc}\displaystyle\frac{1}{\alpha-1}\log\mathop{\mathbb{E}}_{x\sim q(x)}\left[\left(\frac{p(x)}{q(x)}\right)^{\alpha}\right]&\quad\alpha\in(0,1)\cup(1,+\infty)\\\text{KL}(p(x)\|q(x))&\quad\alpha=1\end{array}\right.\\
\chi^2(p(x)\|q(x))&=\mathop{\mathbb{E}}_{x\sim p(x)}\left[\frac{p(x)-q(x)}{q(x)}\right],\\
\text{Hel}^2(p(x),q(x))&=2\left(1-\mathop{\mathbb{E}}_{x\sim p(x)}\left[\sqrt{\frac{q(x)}{p(x)}}\right]\right),\\
V(p(x),q(x))&=\mathop{\mathbb{E}}_{x\sim p(x)}\left[\frac{|p(x)-q(x)|}{p(x)}\right],\\
I(x;y)&=\mathop{\mathbb{E}}_{x\sim p(x)}\left[\textup{KL}(p(y|x)\|p(y))\right],\\
I_\alpha(x;y)&=\inf_{\tilde{p}(y)}\mathop{\mathbb{E}}_{x\sim p(x)}\left[D_\alpha(p(y|x)\|\tilde{p}(y))\right],\\
H(y|x)&=\mathop{\mathbb{E}}_{(x,y)\sim p(x,y)}\left[-\log p(y|x)\right].
\end{align}
We use $\log$ as natural logarithm and calligraphic letters to refer to spaces (i.e. $\mathcal{X}$), and $|\mathcal{X}|$ for their cardinality.

\section{Setting and Basics}\label{sec:definiciones}

%In this work we develop a novel generalization metric which it is able to capitalize on the effect of minimizing cross-entropy to reducing the error probability. This metric will be studied using PAC-Style bounds. Before presenting it, the different styles of generalization bounds will be introduced and some of the most notable recent results will be mentioned. 

\subsection{General Framework}
Let us consider an arbitrary input space $\mathcal{X}$ and
a finite output space $\mathcal{Y}=\{1,\cdots,|\mathcal{Y}|\}$. The examples $(x,y)\in\mathcal{X}\times\mathcal{Y}$ are input-output pairs; $x$ is a description, and $y$ is a label. We study the inductive learning setting where each random example $(x,y)$ is drawn i.i.d. from an unknown probability density distribution $p(x,y)$. Given a training set $S=\{(x_i,y_i)\}_{i=1}^n\sim p(S)\triangleq\prod_{(x,y)\in S}p(x,y)$, a learning algorithm builds a classifier that is later used to classify new examples drawn from $p(S)$. We consider a hypothesis set $\mathcal{H}$ of functions $h:\mathcal{X}\rightarrow\mathcal{Y}$. The learner aims to find $h\in\mathcal{H}$ that assign a label $y$ to a new input $x$ as correctly as possible. 

\subsection{Usual approach for the study of generalization}
Given an example $(x,y)\in\mathcal{X}\times\mathcal{Y}$ and a hypothesis $h\in\mathcal{H}$, we assess the quality of the
prediction of $h$ with a loss function $\ell:\mathcal{H}\times\mathcal{X}\times\mathcal{Y}\rightarrow[0,+\infty)$ that
evaluates to which extent the prediction is accurate. From
the loss $\ell$, we define the true risk $R_p^\ast(h)$ of $h\in\mathcal{H}$, which corresponds to the expected loss over the data distribution $p(S)$:
\begin{equation}
    R_p^\ast(h)\triangleq\mathop{\mathbb{E}}_{(x,y)\sim p(x,y)}\left[\ell(h,x,y)\right].
\end{equation}
The ultimate goal to find the hypothesis from $\mathcal{H}$ that minimizes the true risk which depends distribution $p(x,y)$. In order to cope with the lack of knowledge of $p(x,y)$, the Empirical Risk Minimization (ERM) principle is typically used. The empirical risk,
\begin{equation}
 R_S^\ast(h)=\frac1n\sum_{(x,y)\in S}\ell(h,x,y),
\end{equation}
is minimized with respect to $h\in\mathcal{H}$. That is, the same loss function is used in the definition of the true risk and the empirical risk. While this could be the case in some problems, it is not the case generally, specially for classification problems. In those problems  the ultimate true risk is the classification error probability, which is constructed with $\ell$ equal to the $0$-$1$ loss function. Unfortunately, that loss function is not typically used for the empirical risk because leads 
to a very complex and hard optimization problem. In any case, even if that optimization problem can be solved, we have no guarantee that such $h$ will generalize well on new unseen data. We thus need to bring theoretical guarantees to quantify how close the empirical risk to the true risk.  There are several different approaches that can be used for this problem \cite{DBLP:journals/corr/abs-2102-08649}.

\subsubsection{Uniform Convergence Bounds}
Uniform Convergence bounds consider a complexity measure of the hypothesis set $\mathcal{H}$ and stand for all the hypotheses of $\mathcal{H}$. Among the most renowned complexity measure, we can cite the VC-dimension or the Rademacher Complexity \cite{Shalev-Shwartz_Ben-David_2014}. This type of bound usually takes the form
\begin{equation}
    \mathop{\mathbb{P}}_{S\sim p(S)}\left(\sup_{h\in\mathcal{H}}|R_p^\ast(h)-R_S^\ast(h)|\leq \epsilon(\delta,n,\mathcal{H})\right)\geq 1-\delta.
\end{equation}
Put into words, with high probability $(1-\delta)$ on the random choice of $S$, we obtain good generalization properties when the worst-case of the deviation between the true risk $R_p^\ast(h)$ and its empirical estimate $R_S^\ast(h)$ ($\epsilon(\delta,n,\mathcal{H})$) is low. 

\subsubsection{PAC Bayesian Bounds}
A PAC-Bayesian generalization bound provides a bound in expectation/average over the hypothesis set $\mathcal{H}$ for any posterior measure $p(h|S)$ on $\mathcal{H}$ learned after the observation of $S$. The general form of such bounds is
\begin{equation}
    \mathop{\mathbb{P}}_{S\sim p(S)}\left(\forall p(h|S):\; \mathop{\mathbb{E}}_{h\sim p(h|S)}\left[|R_p^\ast(h)-R_S^\ast(h)|\right]\leq \epsilon(\delta,n,p(h|S))\right)\geq 1-\delta.
\end{equation}
Such PAC-Bayesian bounds are known to be tight, but they
stand for a randomized classifier by nature (due to the expectation over $\mathcal{H}$). A key issue for usual machine learning tasks is then the derandomization of the bounds to obtain a guarantee for a specific classifier \cite{Langford_Shawe_Taylor_2003}, \cite{Nagarajan_Kolter_2019}.

\subsubsection{Point-wise Bounds}

Given a practical algorithm, the minimization procedure defines a posterior probability of the hypothesis given the training set\footnote{In some sense we can also think of $p(h|S)$ as the representative of the randomness of the minimizing algorithm when setting some its hyper-parameters (e.g. initialization, noise injection techniques, etc.) } $p(h|S)$. 
The final model $h\sim p(h|S)$ and the generalization guarantees not longer consider the average with respect to $p(h|S)$ of the true and the empirical risk:
\begin{align}
    \mathop{\mathbb{P}}_{\substack{S\sim p(S)\\h\sim p(h|S)}}&\Big(|R_p^\ast(h)-R_S^\ast(h)|\leq \epsilon(\delta,n,p(h|S),p(S))\Big)\geq 1-\delta.\label{eq:genconmodulo}
\end{align}
It is pretty clear that (\ref{eq:genconmodulo}) implies: 
\begin{align}
    \mathop{\mathbb{P}}_{\substack{S\sim p(S)\\h\sim p(h|S)}}&\Big(R_p^\ast(h)\leq R_S^\ast(h)+ \epsilon(\delta,n,p(h|S),p(S))\Big)\geq 1-\delta.\label{eq:gensinmodulo}
\end{align}

%While \eqref{eq:genconmodulo} matches with the generalization definition: ``the ability to successfully apply rules extracted from previously seen data to the true statistical of the data'', \eqref{eq:gensinmodulo} is linked with the regularization concept. 
Clearly equation (\ref{eq:gensinmodulo}) has a practical and operative connection with the concept of regularization. In order to avoid the overfitting generated by the ERM of $R_S^\ast(h)$, regularization techniques complement the mentioned minimization. As $R_p^\ast(h)=R_S^\ast(h)+(R_p^\ast(h)-R_S^\ast(h))$, it is clear that minimizing the expected risk $R_p^\ast(h)$ is equivalent to simultaneously minimizing the empirical risk $R_S^\ast(h)$ and the difference $R_p^\ast(h)-R_S^\ast(h)$. In conclusion, obtaining tight bounds of the form given by \eqref{eq:gensinmodulo} can have more importance from the practical point of view because $\epsilon(\delta,n,p(h|S),p(S))$ represents a potential regularization-term. In the following some known results in this line of \eqref{eq:genconmodulo} and \eqref{eq:gensinmodulo} will be presented.

%\textcolor{blue}{Every bound $\epsilon(\delta,n,p(h|S),p(S))$ of \eqref{eq:genconmodulo} it is also bound for \eqref{eq:gensinmodulo}, but \eqref{eq:gensinmodulo} allows to find potentially tighter bounds.  In this work we focus on pointwise style bounds over a novel generalization metric related with the idea of complement ERM as \eqref{eq:gensinmodulo}, but in Section \ref{sec:signo} we mention how adapt any result to a bilateral bound. To get an idea of the current state of the art, some of the most important results in this style-bounds will be mentioned below.}

\subsection{Some Known Point-wise bounds using information measures}\label{sec:resultadosbibliograficos}

%We present three recent point-wise bound results, based on information measures, state of the art in this kind of problems. They will be used to compare with our results.  
Although there are several point-wise results that are worth mentioning we will only the following three.

\begin{theorem}\cite[Theo. 8]{pmlr-v83-bassily18a}\label{thm:bassily}
Let $\ell(h,z)=\mathds{1}{\{h(x)=y\}}$, for all $\delta\in(0,1]$:
\begin{equation}
    \mathop{\mathbb{P}}_{\substack{S\sim p(S)\\h\sim p(h|S)}}\left(|R_p^\ast(h)-R_S^\ast(h)|\leq \sqrt{\frac{{I}(S;h)+1+\delta}{2n\delta}}\right)\geq 1-\delta,    
\end{equation}
where ${I}(S;h)$ is the mutual information.
\end{theorem}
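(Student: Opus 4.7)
The plan is to reduce the bound to a concentration statement for a fixed hypothesis $h$ independent of $S$, and then pay the cost of the statistical dependence between $h$ and $S$ through a change of measure whose price is exactly the mutual information $I(S;h)$.

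First, I would fix $h\in\mathcal{H}$ and work with $R_p^\ast(h)-R_S^\ast(h)$ under $p(S)$. Since $\ell(h,x,y)=\mathds{1}\{h(x)=y\}\in[0,1]$ and the $n$ samples in $S$ are i.i.d., Hoeffding's inequality gives
\begin{equation*}
\mathop{\mathbb{P}}_{S\sim p(S)}\!\left(|R_p^\ast(h)-R_S^\ast(h)|>\epsilon\right)\leq 2e^{-2n\epsilon^2},
\end{equation*}
and the standard variance bound for the empirical mean of a $[0,1]$-valued random variable yields $\mathop{\mathbb{E}}_{S\sim p(S)}[(R_p^\ast(h)-R_S^\ast(h))^2]\leq 1/(4n)$. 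Both estimates are pointwise in $h$, so they lift verbatim to the decoupled product measure $p(S)p(h)$.

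Second, I would transfer these estimates to the true joint $p(S,h)=p(S)p(h|S)$ exploiting the identity $\mathrm{KL}(p(S,h)\|p(S)p(h))=I(S;h)$. The natural tool is the Donsker--Varadhan variational formula,
\begin{equation*}
\mathop{\mathbb{E}}_{p(S,h)}[f]\leq \frac{1}{\lambda}\log\mathop{\mathbb{E}}_{p(S)p(h)}\!\left[e^{\lambda f}\right]+\frac{I(S;h)}{\lambda},\qquad \lambda>0,
\end{equation*}
applied to $f(S,h)=(R_p^\ast(h)-R_S^\ast(h))^2$. Combining this with a sub-Gaussian squared moment-generating-function estimate under $p(S)p(h)$ (using the sub-Gaussian proxy $1/(4n)$ of the gap for fixed $h$) and optimizing $\lambda$ produces a bound of the form $\mathop{\mathbb{E}}_{p(S,h)}[2n(R_p^\ast(h)-R_S^\ast(h))^2]\leq I(S;h)+c$ for an absolute constant $c$ that collects the MGF contribution.

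Third, Markov's inequality applied to the non-negative random variable $Z=2n(R_p^\ast(h)-R_S^\ast(h))^2$ converts this expectation bound into
\begin{equation*}
\mathop{\mathbb{P}}_{p(S,h)}\!\left(|R_p^\ast(h)-R_S^\ast(h)|>\epsilon\right)\leq\frac{I(S;h)+c}{2n\epsilon^2},
\end{equation*}
and setting the right-hand side equal to $\delta$ and solving for $\epsilon$ gives an expression of the form $\sqrt{(I(S;h)+c)/(2n\delta)}$. The main obstacle is the precise bookkeeping of constants in the change-of-measure step so that $c$ matches exactly $1+\delta$ as required; the presence of $\delta$ inside the numerator signals that the Markov inversion must be tightened, for example by using $\mathbb{P}(Z>\tau)\leq \mathbb{E}[Z]/(\tau-1)$ for $\tau>1$, or equivalently by taking the alternative route of the binary Kullback--Leibler data-processing inequality $d(\mathbb{P}_{p(S,h)}(E_\epsilon)\|\mathbb{P}_{p(S)p(h)}(E_\epsilon))\leq I(S;h)$ on the bad event $E_\epsilon=\{|R_p^\ast(h)-R_S^\ast(h)|>\epsilon\}$, so that after inversion against the Hoeffding estimate $2e^{-2n\epsilon^2}$ the final numerator inside the radical collects exactly as $I(S;h)+1+\delta$.
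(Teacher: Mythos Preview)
This theorem is not proved in the present paper: it is quoted from \cite{pmlr-v83-bassily18a} as background (Section~\ref{sec:resultadosbibliograficos}). What the paper does reproduce is the key tool of the original proof, Lemma~\ref{lem:littleinformation}: for any event $E$,
\[
\mathop{\mathbb{P}}_{x\sim p}(E)\;\le\;\frac{\textup{KL}(p\|q)+1}{-\log \mathop{\mathbb{P}}_{x\sim q}(E)}.
\]
The argument in \cite{pmlr-v83-bassily18a} applies this pointwise in $h$ with $p=p(S|h)$, $q=p(S)$ and $E_\epsilon=\{|R_p^\ast(h)-R_S^\ast(h)|>\epsilon\}$; Hoeffding gives $\mathop{\mathbb{P}}_{S\sim p(S)}(E_\epsilon)\le 2e^{-2n\epsilon^2}$, averaging over $h\sim p(h)$ turns the KL into $I(S;h)$, and solving $(I(S;h)+1)/(2n\epsilon^2-\log 2)=\delta$ for $\epsilon$ yields the numerator $I(S;h)+1+\delta$ (the $\delta$ absorbs the $\log 2$). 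Your ``alternative route'' in the last paragraph --- binary-KL data processing on $E_\epsilon$ inverted against Hoeffding --- is precisely this argument; Lemma~\ref{lem:littleinformation} is just a tidy packaging of that inversion. So the correct idea is present, but only as an afterthought.

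Your \emph{main} route, by contrast, does not reach the stated constants. Donsker--Varadhan on $f=2n(R_p^\ast-R_S^\ast)^2$ needs $\log\mathbb{E}_{p(S)p(h)}[e^{\lambda f}]$, and for a $\tfrac{1}{4n}$-sub-Gaussian gap the MGF of the square is finite only for $\lambda$ strictly below a fixed threshold (of order $1$). The output is therefore $\mathbb{E}_{p(S,h)}[f]\le I(S;h)/\lambda + c(\lambda)$ with $\lambda<1$, so the coefficient multiplying $I(S;h)$ is always $>1$ and cannot be reduced to the ``$I(S;h)+c$'' you assert; after Markov and inversion you get $\sqrt{(I(S;h)/\lambda+c(\lambda))/(2n\delta)}$, matching the theorem only up to a constant factor inside the radical. (Also, the proposed ``tightened Markov'' $\mathbb{P}(Z>\tau)\le\mathbb{E}[Z]/(\tau-1)$ is not a valid inequality.) The direct change of measure on the tail event --- your alternative, and the route actually taken via Lemma~\ref{lem:littleinformation} --- avoids this loss entirely.
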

Theo. \ref{thm:bassily} shows that the amount of information that an algorithm uses is a natural and important quantity to study. A central idea that this paper revolves around is that a learning algorithm that only uses a small amount of information from its input sample will generalize well. The bound proposes a mutual information as a complexity measure with a $n^{-1/2}$ scaling.

\begin{theorem}\cite[Cor. 4]{Esposito2020RobustGV}\label{thm:esposito}
Let $\ell(h,z)=\mathds{1}{\{h(x)=y\}}$ and $\alpha>1$, for all $\delta\in(0,1]$:
\begin{equation}
    \mathop{\mathbb{P}}_{\substack{S\sim p(S)\\h\sim p(h|S)}}\left(|R_p^\ast(h)-R_S^\ast(h)|\leq \sqrt{\frac{1}{2n}\left({I}_\alpha(S;h)+\log(2)-\frac{\alpha}{\alpha-1}\log\delta\right)}\right)\geq 1-\delta,    
\end{equation}
where ${I}_\alpha(S;h)$ is the Sibson $\alpha$-mutual information.
\end{theorem}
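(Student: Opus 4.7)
The plan is a two-stage argument combining a R\'enyi-type change of measure with Hoeffding concentration. For any event $E$ measurable with respect to $(S,h)$ and any auxiliary marginal $\tilde p(h)$ with appropriate absolute continuity, one can write
\[
\mathbb{P}_{(S,h)\sim p(S,h)}(E)=\mathbb{E}_{(S,h)\sim p(S)\tilde p(h)}\!\left[\frac{p(h|S)}{\tilde p(h)}\,\mathds{1}_E\right],
\]
and then apply H\"older's inequality with conjugate exponents $\alpha$ and $\alpha/(\alpha-1)$, both positive since $\alpha>1$. Using the definition of $D_\alpha$ recalled in the notation section, this produces
\[
\mathbb{P}_{(S,h)\sim p(S,h)}(E)\le \exp\!\left(\tfrac{\alpha-1}{\alpha}\,D_\alpha\!\left(p(S,h)\,\|\,p(S)\,\tilde p(h)\right)\right)\,\mathbb{P}_{(S,h)\sim p(S)\tilde p(h)}(E)^{(\alpha-1)/\alpha},
\]
and optimising the first factor over $\tilde p(h)$ produces the Sibson $\alpha$-mutual information $I_\alpha(S;h)$ as the penalty.

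For the second stage, specialise to $E=\{|R_p^\ast(h)-R_S^\ast(h)|>\epsilon\}$ with $\ell$ equal to the $0$--$1$ loss. Under the product law $p(S)\tilde p(h)$ the sample $S$ is independent of $h$, so conditionally on any fixed $h$ the empirical risk $R_S^\ast(h)$ is an average of $n$ i.i.d.\ $[0,1]$-valued random variables with mean $R_p^\ast(h)$. A two-sided Hoeffding inequality yields $\mathbb{P}_{S\sim p(S)}(|R_p^\ast(h)-R_S^\ast(h)|>\epsilon)\le 2\exp(-2n\epsilon^2)$ uniformly in $h$, and this bound is preserved after integrating against $\tilde p(h)$.

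Combining the two ingredients,
\[
\mathbb{P}_{(S,h)\sim p(S,h)}\!\left(|R_p^\ast(h)-R_S^\ast(h)|>\epsilon\right)\le \exp\!\left(\tfrac{\alpha-1}{\alpha}\,I_\alpha(S;h)\right)\,\bigl(2e^{-2n\epsilon^2}\bigr)^{(\alpha-1)/\alpha}.
\]
Setting the right-hand side equal to $\delta$, taking logarithms and multiplying through by $\alpha/(\alpha-1)$ gives $2n\epsilon^2\ge I_\alpha(S;h)+\log 2-\tfrac{\alpha}{\alpha-1}\log\delta$; solving for $\epsilon$ reproduces exactly the radius in the statement, and complementing the event yields the claim.

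The only real subtlety is matching the form of the R\'enyi penalty generated by H\"older, $\min_{\tilde p(h)}D_\alpha(p(S,h)\,\|\,p(S)\tilde p(h))$, with the variational expression used in the paper's definition of $I_\alpha$; this is resolved by the standard variational characterisation of the Sibson MI. Everything else — the change-of-measure identity, H\"older's inequality, and the Hoeffding tail — is routine, and the same skeleton extends to any bounded (or sub-Gaussian) loss by replacing the factor $2\exp(-2n\epsilon^2)$ with the appropriate concentration tail.
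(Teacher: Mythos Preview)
The paper does not give its own proof of this theorem: it is quoted verbatim as a known result from \cite{Esposito2020RobustGV} in Section~\ref{sec:resultadosbibliograficos}, with no argument supplied. So there is nothing in the paper to compare against; what you have reconstructed is, in essence, the original proof from Esposito et al.

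Your two-stage argument is correct and is exactly the mechanism behind the cited result: the H\"older change-of-measure step (producing the R\'enyi/Sibson penalty) followed by a Hoeffding tail under the decoupled product law $p(S)\tilde p(h)$, and then solving for $\epsilon$. The algebra at the end checks out and reproduces the stated radius.

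One remark on the point you flag as the ``only real subtlety''. The quantity your H\"older step naturally outputs is $\inf_{\tilde p(h)}D_\alpha\!\big(p(S,h)\,\|\,p(S)\tilde p(h)\big)$, i.e.\ the standard Sibson $\alpha$-mutual information. The paper's notation section writes $I_\alpha(x;y)=\inf_{\tilde p(y)}\mathbb{E}_{x}\big[D_\alpha(p(y|x)\|\tilde p(y))\big]$, which is a different (and, for $\alpha>1$, by Jensen no larger) functional. Your bound is therefore stated with the standard Sibson quantity, which is what the original \cite{Esposito2020RobustGV} uses; the paper's alternative expression for $I_\alpha$ is a separate matter you need not resolve here. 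Apart from this definitional bookkeeping, your argument is complete.
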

Similar to the previous theorem, Theo. \ref{thm:esposito} proposes a Sibson $\alpha$-mutual information as a complexity measure with a $n^{-1/2}$ scaling. {As mentioned in \cite{Esposito2020RobustGV}, one important characteristic of these results is that they involve information-measures satisfying the data-processing
inequality \cite{6832827}. This means that all these results about generalization are robust to post-processing, i.e., if the outcome of any learning algorithm with bounded mutual information (classical or Sibson) is processed further, the value of the information measure cannot increase.}

\begin{theorem}\cite[Theo. 2]{DBLP:journals/corr/abs-2102-08649}\label{thm:viallard}
Let $\phi:\mathcal{H}\times(\mathcal{X}\times\mathcal{Y})^n\rightarrow(0,+\infty)$ and $\alpha>1$, for all $\delta\in(0,1]$ and any prior $\tilde{p}(h)$:
\begin{equation}
    \mathop{\mathbb{P}}_{\substack{S\sim P(S)\\h\sim p(h|S)}}\left(\log\phi(h,S)\leq{D}_\alpha(p(h|S)\|\tilde{p}(h))+\log\left[\mathop{\mathbb{E}}_{\substack{S^\prime\sim P(S)\\h^\prime\sim \tilde{p}(h)}}\left[\phi(h^\prime,S^\prime)\right]\left(\frac{2}{\delta}\right)^{1+\frac{\alpha}{\alpha-1}}\right]\right)\geq 1-\delta,    
\end{equation}
where ${D}_\alpha(\cdot\|\cdot)$ is the R\'enyi $\alpha$-divergence.
\end{theorem}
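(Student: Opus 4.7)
The plan is to combine a Hölder-based change-of-measure inequality (which is exactly where the Rényi divergence $D_\alpha$ naturally appears) with a two-step Markov argument: one Markov step for $h$ conditional on $S$, and another for $S$ marginally. The key is to apply Markov to a well-chosen power of $\phi$ so that, after all the algebra, the coefficient in front of $D_\alpha$ equals exactly $1$ and the exponent on $(2/\delta)$ equals exactly $1+\alpha/(\alpha-1)$, as in the target statement.

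More concretely, let $\alpha'=\alpha/(\alpha-1)$ so that $1/\alpha+1/\alpha'=1$. For any fixed $S$, Hölder's inequality applied with the Radon--Nikodym derivative $p(h|S)/\tilde{p}(h)$ yields
\begin{equation*}
\mathop{\mathbb{E}}_{h\sim p(h|S)}\!\left[\phi(h,S)^{1/\alpha'}\right]
\;\le\;
\left(\mathop{\mathbb{E}}_{h\sim\tilde{p}(h)}\!\left[\left(\tfrac{p(h|S)}{\tilde{p}(h)}\right)^{\alpha}\right]\right)^{\!1/\alpha}
\left(\mathop{\mathbb{E}}_{h\sim\tilde{p}(h)}\!\left[\phi(h,S)\right]\right)^{\!1/\alpha'},
\end{equation*}
and the first factor is exactly $\exp\!\big(\tfrac{\alpha-1}{\alpha}D_\alpha(p(h|S)\|\tilde{p}(h))\big)$ by the definition of Rényi $\alpha$-divergence recalled in the introduction. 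Now I would apply Markov's inequality under $h\sim p(h|S)$ to the non-negative random variable $\phi(h,S)^{1/\alpha'}$ with level $\delta/2$; after raising both sides to the $\alpha'$-th power, the factor $(\alpha-1)\alpha'/\alpha=1$ comes out clean, producing
\begin{equation*}
\phi(h,S)\;\le\;\left(\tfrac{2}{\delta}\right)^{\!\alpha'}\,e^{D_\alpha(p(h|S)\|\tilde{p}(h))}\,\mathop{\mathbb{E}}_{h'\sim\tilde{p}(h)}\!\left[\phi(h',S)\right]
\end{equation*}
with conditional probability at least $1-\delta/2$ given $S$.

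The second step is to remove the dependence on $S$ inside the prior-expectation on the right-hand side. I would apply Markov's inequality again, this time to the non-negative random variable $\mathbb{E}_{h'\sim\tilde{p}(h)}[\phi(h',S)]$ under $S\sim p(S)$, at level $\delta/2$, obtaining with probability at least $1-\delta/2$ over $S$ that
\begin{equation*}
\mathop{\mathbb{E}}_{h'\sim\tilde{p}(h)}\!\left[\phi(h',S)\right]\;\le\;\tfrac{2}{\delta}\mathop{\mathbb{E}}_{S'\sim p(S),\,h'\sim\tilde{p}(h)}\!\left[\phi(h',S')\right].
\end{equation*}
Combining the two bounds by a union bound (so total failure probability at most $\delta$) and taking logarithms gives
\begin{equation*}
\log\phi(h,S)\le D_\alpha(p(h|S)\|\tilde{p}(h))+\log\!\left[\mathop{\mathbb{E}}_{S',h'}\!\left[\phi(h',S')\right]\right]+(\alpha'+1)\log(2/\delta),
\end{equation*}
which is the claimed inequality since $\alpha'+1=1+\alpha/(\alpha-1)$.

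The main obstacle is not analytical depth but rather the correct \emph{choice} of the exponent on $\phi$ at the Markov step: applying Markov directly to $\phi$ (instead of to $\phi^{1/\alpha'}$) would produce $D_\alpha$ with coefficient $(\alpha-1)/\alpha<1$, and conversely a naive application of Hölder to bound a tail probability directly would produce a factor $(2/\delta)^{\alpha'}$ in the wrong place. The power $1/\alpha'$ is calibrated so that the Hölder exponent $(\alpha-1)/\alpha$ and the Markov-induced power $\alpha'$ multiply to exactly $1$, neutralizing the ``fractional'' coefficient in front of $D_\alpha$ and yielding the clean form stated in the theorem. Everything else (the outer Markov over $S$, the union bound, the exponent bookkeeping $\alpha'+1=1+\alpha/(\alpha-1)$) is routine once this calibration is correctly chosen.
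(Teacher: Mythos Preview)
The paper does not prove this theorem: it is quoted verbatim as a known result from \cite{DBLP:journals/corr/abs-2102-08649} in the background section on existing point-wise bounds, and is only \emph{used} later (in Corollary~\ref{cor:viallard}) by specializing $\phi$. So there is no in-paper proof to compare against.

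That said, your argument is correct and is essentially the standard route taken in the original reference: a H\"older change of measure from $p(h|S)$ to $\tilde p(h)$ (which produces the $D_\alpha$ term), followed by two Markov steps---one over $h$ given $S$, one over $S$---combined by a union bound. Your key observation, that Markov must be applied to $\phi^{1/\alpha'}$ rather than to $\phi$ so that the H\"older factor $(\alpha-1)/\alpha$ and the exponent $\alpha'=\alpha/(\alpha-1)$ multiply to $1$, is exactly the calibration that makes the coefficient of $D_\alpha$ come out as $1$ and the $(2/\delta)$ exponent come out as $1+\alpha/(\alpha-1)$. The only implicit assumption you are using (and which is standard in this literature) is that $p(h|S)\ll\tilde p(h)$ so that the Radon--Nikodym derivative and hence $D_\alpha$ are well defined; otherwise the bound is vacuous.
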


Finally, Theo. \ref{thm:viallard} shows a general inequality for a general functional $\phi(h,S)$. While standard result is to choose $\phi(h,S)=e^{\frac{2n\alpha}{\alpha-1}(R_p^\ast(h)-R_S^\ast(h))^2}$, this theorem is more flexible, because among other things it is not only valid for the $0-1$ loss function as the previous ones. Similarly as the other results, it shows that information-theoretic quantities (Rényi $\alpha$-divergence in this case) are important.

\section{PACMAN Analysis}\label{sec:pacmanintro}

PAC style bounds are usually defined for a specific loss function present both at training and testing phases. However in practice, specially for classification tasks, the loss used at training is not same that the one used at testing. It is typically the case that at the training phase the negative log-loss (i.e. cross entropy risk) is used in with the  ERM principle. This is due to the fact that this loss, jointly with a sufficient smooth parametrization of the hypothesis space leads to a risk function that can be efficiently solved using well-known procedures as backpropagation. However, the ultimate goal is the minimization of the missclassification probability, and it is typically this metric that is usually reported as performance metric in most of works. Missclassification probability (or accuracy) is linked with the $0$-$1$ loss, which is well-known to be difficult to be efficiently optimized during training using the ERM principle. In this way, in practice, there is a mismatch between the loss functions used at training and testing, that it is usually ignored in most works studying generalization for learning problems. In the following, we will study the generalization capabilities of learning algorithms using point-wise PAC bounds and considering explicitly the above mentioned mismatch. 

%but in the practice there are many metrics involved in the learning process. We focus on classification algorithms which training minimizing the negative log-loss (i.e. cross entropy risk) where the ultimate goal is maximize the accuracy (or minimize the misclassification probability). We need to introduce the different loss functions.

\subsection{Definitions}\label{sec:pacmandef}
Classification algorithms usually learn a vectorial function $\mathbf{h}:\mathcal{X}\rightarrow(0,1)^{|\mathcal{Y}|}$ before of final hypothesis $h(\cdot)$, where the ultimate decision is based on $h(x)=\mathop{\arg\max}_{y\in\mathcal{Y}}(\mathbf{h}(x))_y$. When the algorithm end with a sigmoid layer (binary classification) or soft-max layer (multi-label classification) this function meet $\sum_{y\in\mathcal{Y}}(\mathbf{h}(x))_y=1$ and its values can be interpreted as probabilities associated with the corresponding class labels\footnote{In fact, sigmoid layers return a single value, precisely because the remainder can be calculated by the complement.}. This interpretation arises that the classifier is chosen during the training stage minimizing the empirical cross-entropy on a finite training $n-$length set $S$:
\begin{equation}\label{eq:empce}
\text{CE}_{S}(\mathbf{h})=\frac{1}{n}\sum_{(x,y)\in S}-\log(\mathbf{h}(x))_{y},
\end{equation}
where the minimization is over an specific hypothesis space $\mathbf{h}\in\mathcal{H}$. Cross-entropy \eqref{eq:empce} is the empirical approximation of the true cross-entropy defined as:
\begin{equation}
\text{CE}_p(\mathbf{h})=\mathop{\mathbb{E}}_{(x,y)\sim p(x,y)}\left[-\log({\mathbf{h}}(x))_{y}\right].
\end{equation}
It is easy to show that:
\begin{equation}
    \text{CE}_p(\mathbf{h})=H(y|x)+\mathop{\mathbb{E}}_{x\sim p(x)}\left[\text{KL}\left(p(y|x)|({\mathbf{h}}(x))_{y}\right)\right],
\end{equation}
where $H(y|x)$ is the conditional entropy associated to  $p(y|x)$. This shows that $\text{CE}_p(\mathbf{h})$ is minimized when ${\mathbf{h}}(x)$ is the conditional class label probability given $x$, and the global infimum of $\text{CE}_p(\mathbf{h})$ is the conditional entropy. It is obvious that  when the hypothesis set $\mathcal{H}$ contains this conditional probability of the labels given the inputs this global optimum can be achievable.  

It is clear that ${\mathbf{h}}(x)$ can be interpreted as a stochastic or soft classifier, where $\hat{y}$ is defined as the random variable induced by the probability measure $\hat{y}\sim \mathbf{h}(x)$ with $(x,y)\sim p(x,y)$. It is also clear that, despite the above discussion about $\text{CE}_p(\mathbf{h})$ the most important risk to evaluate for a classifier (at least during the testing phase) is the probability of the error event $\{y\neq\hat{y}\}$, i.e. the probability that $\mathbf{h}$ misclassifies an example, which in this case can be written as:
%\textcolor{red}{Veo que las variables aleatorias están siempre defindas con minuscula en lugar de mayusculas como es la costumbre. Si bien no creo que sea algo muy critico quizas para ser mas consistentes deberiamos definirlas con mayusculas, salvo cuando hablamos de una realizacion especficiads de las mismas. Yo ahora dejo todo como esta para no agregar mas ruido.}
\begin{align}
R_p(\mathbf{h})& = \mathop{\mathbb{P}}_{\substack{(x,y)\sim p(x,y)\\\hat{y}\sim\mathbf{h}(x)}}\Big(\hat{y}\neq y\Big)\\
&=1-\mathop{\mathbb{E}}_{\substack{(x,y)\sim p(x,y)\\\hat{y}\sim\mathbf{h}(x)}}\left[\mathds{1}\left\{y=\hat{y}\right\}\right]\\
&= 1-\mathop{\mathbb{E}}_{(x,y)\sim p(x,y)}\left[({\mathbf{h}}(x))_y\right].
\end{align}
%\textcolor{blue}{Intentaría fuertemente evitar escribir integrales, pierde un poco de generalidad y no se como se lo va a tomar la comunidad (fue una de las cosas criticadas en ISIT).}

This risk is globally minimized with the Bayes classifier \cite[Ex. 2.11]{Duda}, that is $({\mathbf{h}}(x))_y=$\\ $\mathds{1}\left\{y=\arg\max_{y^\prime\in\mathcal{Y}}p(y^\prime|x)\right\}$ which is clearly a deterministic (or hard classifier) and do not coincide, in general, with the optimal of $\text{CE}_p(\mathbf{h})$ as explained above. In a more practical setting, in which the hypothesis class $\mathcal{H}$ do not contain this optimal classifier, it is customary to use the $\mathbf{h}(x)$ obtained by the application of the ERM principle (using (\ref{eq:empce})), to construct a hard classification rule $h(x)$ by:
\begin{equation}
 h(x)=\mathop{\arg\max}_{y\in\mathcal{Y}}(\mathbf{h}(x))_y,
\end{equation} 
which will lead to the misclassification probability given by $R_p^\ast(h)\equiv\mathop{\mathbb{P}}_{(x,y)\sim p(x,y)}\big(h(x)\neq y\big)$. The mismatch in the loss function between the training and testing phase is clear: while at training we use $-\log(\mathbf{h}(x))_{y}$, at testing the loss function is $\mathds{1}\left\{h(x)\neq y\right\}$.
%As the optimal classifier is a non-stochastic one, the performance of the algorithm usually improves with using a hard classification rule $h(x)$ with its respectively error probability as a risk:
%\begin{equation}
%R_p^\ast(h)=\mathop{\mathbb{P}}_{(x,y)\sim p(x,y)}\big(h(x)\neq y\big)\qquad\textup{with}\quad h(x)=\mathop{\arg\max}_{y\in\mathcal{Y}}(\mathbf{h}(x))_y
%\end{equation} 
%Just as the minimization of empirical cross-entropy is a maximum likelihood estimator of the whole probability law \cite[Sec. 24.1.2]{shalev2014understanding}, this procedure (estimates the conditional probability $\mathbf{h}(x)$ and take the maximum) can be interpreted as the invariance property of this kind of estimators to estimate the Bayes classifier.

In this work we relax the problem by studying $R_p(\mathbf{h})$ instead of $R_p^\ast(h)$ as loss function at testing. This is because, the analysis of $R_p^\ast(h)$ could be harder. Later, we will see that between $\text{CE}_{S}(\mathbf{h})$ and $R_p(\mathbf{h})$ there is a nice relation that can be exploited, which is not the case when we consider $R_p^\ast(h)$.  On the other hand, from a practical point of view, we think that this is not a critical issue.  The empirical minimization of the cross-entropy where the last layer is soft-max tends to concentrate the probabilities \cite[Sec. 6.2.2]{Goodfellow-et-al-2016}, reducing the gap between both risks. This effect is also related to the fact that the algorithm learns through samples and empirical distribution is concentrated. On the other hand, this concentration effect is exacerbated by large neural networks and current regularization methods, such as batch normalization or weight decay, makes a calibration stage recommended if you want the soft-max output to be representative of the conditional probabilities \cite{guo17}. From this perspective, the approximation $R_p^\ast(h)\approx R_p(\mathbf{h})$ is relatively justified for modern neural architectures whose training is done using the negative log-loss.
In any case, in this work, if we would like to extend the results obtained to the case in which $R_p^\ast(h)$ is of interested at testing time, we just need to assume $R_p^\ast(h)\lesssim R_p(\mathbf{h})$ since we will look for upper bounds on $R_p(\mathbf{h})$ depending on $\text{CE}_{S}(\mathbf{h})$. {In Appendix \ref{sec:hardsoft} we study theoretical conditions to ensure this.}

%In terms of generating PAC style bounds, authors usually need to choose a loss function: $\text{CE}_p(\mathbf{h})\leq\text{CE}_{S}(\mathbf{h})+\text{BOUND}$ or $R_p(\mathbf{h})\leq R_{S}(\mathbf{h})+\text{BOUND}$, where $R_{S}(\mathbf{h})$ is an empirical estimation of the risk $R_p(\mathbf{h})$, but these approaches are not entirely satisfactory when the bound goal is to study the complement o
As explained above, while the goal is to find guarantees over $R_p(\mathbf{h})$, the magnitude which is minimized during training is $\text{CE}_{S}(\mathbf{h})$. But the distinct nature of these magnitudes makes their comparison not immediate. We proposes to study a quantity that is a bijection of $R_p(\mathbf{h})$, defined as:
\begin{equation}
\widetilde{\text{CE}}_p(\mathbf{h})=-\log\left[1-R_p(\mathbf{h})\right].
\end{equation}
We will refer to this quantity as the cross-entropy of the expected risk.
In this way, we focus in style bounds using the cross-entropy of the expected risk, i.e. $\widetilde{\text{CE}}_p(\mathbf{h})\leq \text{CE}_{S}(\mathbf{h})+\text{BOUND}$ which are equivalent to $R_p(\mathbf{h})\leq 1-e^{-\text{CE}_{S}(\mathbf{h})-\text{BOUND}}\leq\text{CE}_{S}(\mathbf{h})+\text{BOUND}$ . We call this approach PACMAN (PAC-style bounds over the Mismatch of Accuracy and Negative log-loss) analysis. This analysis study the gap $\widetilde{\text{CE}}_p(\mathbf{h})-\text{CE}_{S}(\mathbf{h})$ as a generalization metric; in Appendix \ref{sec:signo} we present an auxiliary discussion about its behaviour.
%As $\widetilde{\text{CE}}_p(\mathbf{h})\leq\text{CE}_p(\mathbf{h})$, by Jensen's inequality \cite[Theorem 2.6.2]{cover}, this approach will induce small upper-bounds than classic cross-entropies approaches. In the following section we introduce some ideas to compare PACMAN bounds with traditional bounds over the error probability gap.
As we will consider point-wise PAC bounds we will introduce a posterior distribution $p(\mathbf{h}|S)$.
Using the i.i.d. distribution of the training set $p(S)$ an a priori distribution $p(\mathbf{h})$ and a likelihood $p(S|\mathbf{h})$ can be obtained through Bayes rule.  
\begin{remark}
The main difference between the prior $p(\mathbf{h})$ and the classical prior $\tilde{p}(\mathbf{h})$  of a PAC bayesian bound is that $p(\mathbf{h})$ is defined with the data distribution $p(S)$ and the algorithm mapping $p(\mathbf{h}|S)$, while $\tilde{p}(\mathbf{h})$ is an auxiliary distribution which can be arbitrarly chosen. 
\end{remark}
In the following we introduce some ideas to compare PACMAN bounds with traditional bounds over the error probability gap.

\subsection{Relation between PACMAN and standard PAC point-wise bounds}\label{sec:comparison}

%The logical question is how to compare, or relate, a PACMAN bound over \eqref{eq:ratio} with a classic structure over ${\textup{CE}}_p(\mathbf{h})-\textup{CE}_{S}(\mathbf{h})$ or $R_p^\ast(h)-R_S^\ast(h)$. It is straightforward to compare our metric with cross-entropy gap using Jensen inequality \cite[Theorem 2.6.2]{cover}:
%\begin{equation}
%\widetilde{\textup{CE}}_p(\mathbf{h})-\textup{CE}_{S}(\mathbf{h})\leq{\textup{CE}}_p(\mathbf{h})-\text%up{CE}_{S}(\mathbf{h})
%\end{equation}
%where PACMAN analysis generates potentially lower bounds. Unfortunately there are not too many results of these characteristics due to convergence problems of ${\textup{CE}}_p(\mathbf{h})$ in general. To study this object, it is usually required that the cost function be bounded or at least subgaussian on ${\mathcal{H}}$ \cite{Xu_Raginsky_2017}. But, compare with traditional risk gap is not straightforward. The following lemma proposes a comparison framework. 
In the following sections we will obtain PAC point-wise bounds with the following form:
\begin{equation}\label{eq:aux_pacman}
\mathop{\mathbb{P}}_{\substack{S\sim p(S)\\\mathbf{h}\sim{p(\mathbf{h}|S)}}}\left(\widetilde{\textup{CE}}_p(\mathbf{h})\leq\textup{CE}_{S}(\mathbf{h})+\epsilon(\delta,n,p(\mathbf{h}|S),p(S))\right)\geq1-\delta.
\end{equation}
The values $\epsilon(\delta,n,p(\mathbf{h}|S),p(S))$ in those bounds will be called PACMAN bounds. The following lemma allow us to see how we can use them to obtain bounds on the risk $R_p(\mathbf{h})$:

\begin{lemma}\label{lem:paracomparar}
Let $\epsilon(\delta,n,p(\mathbf{h}|S),p(S))$ a PACMAN bound, i.e.
\begin{equation}
\mathop{\mathbb{P}}_{\substack{S\sim p(S)\\\mathbf{h}\sim{p(\mathbf{h}|S)}}}\left(\widetilde{\textup{CE}}_p(\mathbf{h})\leq\textup{CE}_{S}(\mathbf{h})+\epsilon(\delta,n,p(\mathbf{h}|S),p(S))\right)\geq1-\delta.
\end{equation}
Then
\begin{equation}\label{eq:comparar}
\mathop{\mathbb{P}}_{\substack{S\sim p(S)\\\mathbf{h}\sim{p(\mathbf{h}|S)}}}\left(R_p(\mathbf{h})\leq\left(1-e^{-\textup{CE}_{S}(\mathbf{h})}\right) + e^{-\textup{CE}_{S}(\mathbf{h})}\epsilon(\delta,n,p(\mathbf{h}|S),p(S))\right)\geq1-\delta.
\end{equation}
%In addition, if $\displaystyle\mathop{\mathbb{E}}_{\mathbf{h}\sim{p(\mathbf{h})}}\left[ \widetilde{\textup{CE}}_p(\mathbf{h})\right]\leq\mathop{\mathbb{E}}_{\substack{S\sim p(S)\\\mathbf{h}\sim{p(\mathbf{h}|S)}}}\left[\textup{CE}_{S}(\mathbf{h})\right]+ \mu(n,p(\mathbf{h}|S),p(S))$, then
%\begin{equation}%\label{eq:exp}
%\mathop{\mathbb{E}}_{\mathbf{h}\sim{p(\mathbf{h})}} \left[R_p^\ast(h)\right]\leq\mathop{\mathbb{E}}_{\substack{S\sim p(S)\\\mathbf{h}\sim{p(\mathbf{h}|S)}}}
%\left[1-e^{-\textup{CE}_{S}(\mathbf{h})}\right]+\mu(n,p(\mathbf{h}|S),p(S))
%\end{equation}
\end{lemma}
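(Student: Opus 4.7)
The plan is to unwind the definition of $\widetilde{\textup{CE}}_p(\mathbf{h})$ and then linearize the resulting exponential using the elementary inequality $e^{-x}\geq 1-x$.

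First I would start from the event whose probability is at least $1-\delta$, namely
\[
\widetilde{\textup{CE}}_p(\mathbf{h})\leq\textup{CE}_{S}(\mathbf{h})+\epsilon(\delta,n,p(\mathbf{h}|S),p(S)),
\]
and substitute the definition $\widetilde{\textup{CE}}_p(\mathbf{h})=-\log[1-R_p(\mathbf{h})]$. Rearranging and exponentiating both sides (both operations are monotone), this is equivalent to
\[
1-R_p(\mathbf{h})\geq e^{-\textup{CE}_{S}(\mathbf{h})-\epsilon(\delta,n,p(\mathbf{h}|S),p(S))}=e^{-\textup{CE}_{S}(\mathbf{h})}\cdot e^{-\epsilon(\delta,n,p(\mathbf{h}|S),p(S))},
\]
or equivalently $R_p(\mathbf{h})\leq 1-e^{-\textup{CE}_{S}(\mathbf{h})}\cdot e^{-\epsilon(\delta,n,p(\mathbf{h}|S),p(S))}$.

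Next, I would apply the elementary convexity inequality $e^{-x}\geq 1-x$ valid for every $x\in\mathbb{R}$, with $x=\epsilon(\delta,n,p(\mathbf{h}|S),p(S))$. This yields
\[
-e^{-\epsilon(\delta,n,p(\mathbf{h}|S),p(S))}\leq -1+\epsilon(\delta,n,p(\mathbf{h}|S),p(S)),
\]
so multiplying by the nonnegative factor $e^{-\textup{CE}_{S}(\mathbf{h})}$ and adding $1$ gives exactly
\[
1-e^{-\textup{CE}_{S}(\mathbf{h})}\cdot e^{-\epsilon(\delta,n,p(\mathbf{h}|S),p(S))}\leq \bigl(1-e^{-\textup{CE}_{S}(\mathbf{h})}\bigr)+e^{-\textup{CE}_{S}(\mathbf{h})}\epsilon(\delta,n,p(\mathbf{h}|S),p(S)).
\]
Chaining the two inequalities, the event in the hypothesis is contained in the event appearing inside the probability in \eqref{eq:comparar}, so the desired lower bound of $1-\delta$ transfers immediately.

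This argument is entirely deterministic once we are on the high-probability event, so there is no genuine obstacle; I do not expect any step to be difficult. The only thing worth being careful about is the monotonicity directions when exponentiating and the sign of the factor $e^{-\textup{CE}_{S}(\mathbf{h})}$ (which is positive, so inequalities are preserved). One could sharpen the bound by keeping the form $1-e^{-\textup{CE}_{S}(\mathbf{h})}e^{-\epsilon}$ without invoking $e^{-x}\geq 1-x$, but the linearized version stated in the lemma is the one directly comparable with classical PAC bounds on $R_p(\mathbf{h})$, which is precisely the purpose of Section~\ref{sec:comparison}.
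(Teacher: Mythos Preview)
Your proposal is correct and follows essentially the same route as the paper: both arguments unwind the definition $\widetilde{\textup{CE}}_p(\mathbf{h})=-\log[1-R_p(\mathbf{h})]$ and invoke the elementary inequality $1-e^{-x}\leq x$ (equivalently $e^{-x}\geq 1-x$). The only cosmetic difference is the order of operations---the paper first derives the deterministic bound $R_p(\mathbf{h})\leq(1-e^{-\textup{CE}_S(\mathbf{h})})+e^{-\textup{CE}_S(\mathbf{h})}[\widetilde{\textup{CE}}_p(\mathbf{h})-\textup{CE}_S(\mathbf{h})]$ and then applies the PACMAN event, whereas you restrict to the event first and linearize in $\epsilon$ afterward---but the two computations are algebraically identical.
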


\begin{figure}[t]
    \centering\includegraphics[scale=0.6]{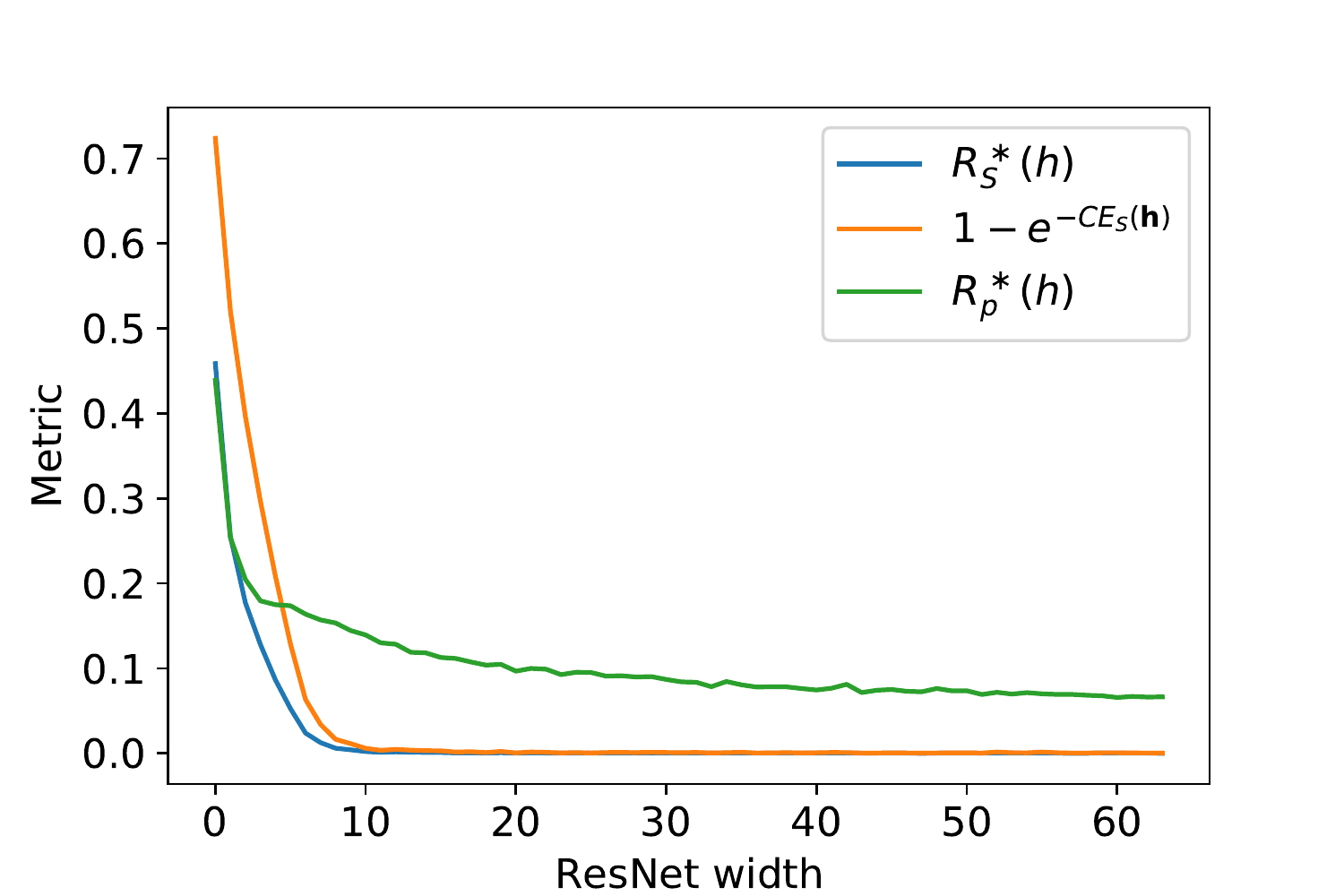}
    \caption{Comparison between cross-entropy bias $1-e^{-{\textup{CE}}_S(\mathbf{h})}$ (computed with the training set), empirical error probability $R_S^\ast(h)$ (computed with the training set) and expected error probability $R_p^\ast(h)$ (computed with the testing set) in a ResNet18 algorithm training with CIFAR-100.}
    \label{fig:bias}
\end{figure}
The proof is relegated to Appendix \ref{app:paracomparar}. It is important to note that in the above result we have: 1) a bias term smaller than the empirical cross entropy $1-e^{-{\textup{CE}}_S(\mathbf{h})}\leq \textup{CE}_S(\mathbf{h})$, 2) a term $e^{-{\textup{CE}}_S(\mathbf{h})}\leq1$ which weights the influence of $\epsilon(\delta,n,p(\mathbf{h}|S),p(S))$. In practice, for large and typically overparametrized networks the bias value will be very small.  Fig. \ref{fig:bias} shows the different metrics for a ResNet18 \cite{resnet} algorithm over CIFAR-100 dataset for different ResNet widths (layer  widths  for  the  4  blocks  are $[k,2k,4k,8k]$ for varying $k\in\mathbb
N$). Simulation uses the trained model presented in \cite{Nakkiran2020Deep}\footnote{Original codes of \cite{Nakkiran2020Deep} are available at \url{https://gitlab.com/harvard-machine-learning/double-descent/tree/master}.} (more details can be found in the original paper). For small networks there is a difference between the bias terms (with respect to $R_p^\ast(h)$), but for a reasonable widths the difference is not appreciable (standard ResNet18 corresponds to 64 convolutional channels in the first layer). Under these considerations, it is reasonable that PACMAN bounds of $\widetilde{\textup{CE}}_p(\mathbf{h})-\textup{CE}_{S}(\mathbf{h})$, i.e. $\epsilon(\delta,n,p(\mathbf{h}|S),p(S))$ are also bounds for $R_p(\mathbf{h})$, that is, with probability at least $1-\delta$:
\begin{equation}
  R_p^\ast(h)\lesssim R_p(\mathbf{h})\leq\textup{CE}_S(\mathbf{h})+
  \epsilon(\delta,n,p(\mathbf{h}|S),p(S))\approx\epsilon(\delta,n,p(\mathbf{h}|S),p(S)).
  \label{eq:generalization_rel}
\end{equation}

\section{PACMAN Bounds: Main Results}\label{sec:mainresults}

In this section we introduce different PACMAN bounds. The working methodology is similar to one using in other PAC point-wise bounds papers, but based on the analysis of equation \eqref{eq:aux_pacman}. We begin by considering $q(x,y|\mathbf{h})$, the conditional distribution of $(x,y)$ given the event $\{y=\hat{y}\}$ for a specific hypothesis $\mathbf{h}\in\mathcal{H}$. It is easy to show using Bayes rule: 
\begin{align}
q(x,y|\mathbf{h})&=\frac{\displaystyle\mathop{\mathbb{P}}_{\hat{y}\sim\mathbf{h}(x)}\left(\hat{y}=y\right)}{\displaystyle\mathop{\mathbb{P}}_{\substack{(x^\prime,y^\prime)\sim p(x,y)\\\hat{y}\sim\mathbf{h}(x^\prime)}}\Big(\hat{y}=y^\prime\Big)}p(x,y)=\frac{({\mathbf{h}}(x))_{y}}{1-R_p(\mathbf{h})}{p}(x,y)\label{eq:qmed}.
\end{align} 
 It is pretty clear that $q(x,y|\mathbf{h})$ is an absolutely continuous with respect to $p(x,y)$. It is important to note than $q(x,y|\mathbf{h})=p(x,y)$ if and only if $(\mathbf{h}(x))_y=\frac{1}{|\mathcal{Y}|}$ is constant for all $(x,y)$ which is a classifier that takes uniform and randomized decisions (also independent from the description $x$) about the label.  Consider  $q(S|\mathbf{h})=\prod_{(x,y)\in S} q(x,y|\mathbf{h})$ for $h\in\mathcal{H}$.  The difference between $\widetilde{\text{CE}}_p(\mathbf{h})$ and the empirical cross-entropy can be written as a likelihood ratio
\begin{equation}\label{eq:ratio}
\widetilde{\text{CE}}_p(\mathbf{h})-\text{CE}_{S}(\mathbf{h})=\frac{1}{n}\sum_{(x,y)\in S}\log\frac{({\mathbf{h}}(x))_{y}}{1-R_p(\mathbf{h})}=\frac{1}{n}\log\frac{q(S|\mathbf{h})}{p(S)}.
\end{equation}

%\textcolor{red}{This result explains that the more likely the training data is compared to how likely it would be if they were generated by $q(S|\mathbf{h})$, the algorithm will have a better ability to generalize over unknown samples at the time of training. Considering the $q(S|\mathbf{h})$ definition, it can be interpreted as a bad generalization is only likely for correctly classified samples (overfitting). 
%\begin{remark}\label{rem:testhyp}
%For each $\mathbf{h}\in{\mathcal{H}}$, the training set is distributed as $S\sim p(S|\mathbf{h})$ but the testing one is distributed with $p(S)$, because both sets are independent and $\mathbf{h}$ is chosen from the training one. In this context, this likelihood has associated a Neyman-Pearson hypothesis testing \cite[Theorem 11.7.1]{cover}. The uniformly most powerful test associated to the hypothesis $H_0:S\sim p(S)$ and $H_1:S\sim q(S|\mathbf{h})$ is to compare the PACMAN generalization gap $\widetilde{\text{CE}}_p(\mathbf{h})-\text{CE}_{S}(\mathbf{h})$ with a threshold. While small gap refers that the training and testing behaviour is similar (with respect to $\mathbf{h}$), bad generalization is linked with the distribution of the correct classified samples $q(S|\mathbf{h})$, i.e. it is really likely that the training set is classified correctly. 
%\end{remark}}
%\textcolor{red}{Estos parrafos creo que hay que mirarlos con cuidado. Tengo algunas ideas para refinar un poco lo que se dice aca}

Equation (\ref{eq:ratio}) is conceptually interesting. If $\frac{1}{n}\log\frac{q(S|\mathbf{h})}{p(S)}$ is large, that would be indicative a poor generalization capabilities (see Lemma \ref{lem:paracomparar}). This would be the case if the distribution $q(S|\mathbf{h})$ is ``more likely'' that $p(S)$ when observing the training set $S$ and with $\mathbf{h}$ chosen using $S$ (as a learning algorithm would do). A more likely $q(S|\mathbf{h})$, which means that the chosen $\mathbf{h}$ is correctly classifying the training data $S$, could be linked to the idea that overfitting is present. In section \ref{sec:LR} we will use \eqref{eq:ratio} to make some connections with a Neyman-Pearson decision problem \cite[Theorem 11.7.1]{cover}.
%In order to compare our results with Theo. \ref{thm:bassily} and \ref{thm:esposito}, we relax those bounds via data-processing inequality $I(S;\mathbf{h})\geq I(S;h)$ and $I_\alpha(S;\mathbf{h})\geq I_\alpha(S;h)$ which is valid because $h(x)=f(\mathbf{h}(x))$ \cite{6832827}:
%\begin{align}
%\mathop{\mathbb{P}}_{\substack{S\sim p(S)\\\mathbf{h}\sim p(\mathbf{h}|S)}}&\left(R_p^\ast(h)\leq R_S^\ast(h)+ \sqrt{\frac{1}{2n}\left({I}_\alpha(S;\mathbf{h})+\log(2)-\frac{\alpha}{\alpha-1}\log\delta\right)}\right)\geq 1-\delta\label{eq:cota1comp}\\   
%\mathop{\mathbb{P}}_{\substack{S\sim p(S)\\\mathbf{h}\sim p(\mathbf{h}|S)}}&\left(R_p^\ast(h)\leq R_S^\ast(h)+ \sqrt{\frac{{I}(S;\mathbf{h})+1+\delta}{2n\delta}}\right)\geq 1-\delta\label{eq:cota2comp}
%\end{align}

In the following we will work with (\ref{eq:ratio}) in order to obtain PACMAN bounds. We will see how different information-theoretic quantities play a significant role in those bounds. We will also compare our results with some results from the literature. 

%how our results compare with others form the literature

%The focus of this work is to introduce the PACMAN analysis and to show how this approach can conceptually get the same results as the classic one. In this sense we separate our results in three groups to relate to different aspects of the bibliography. But all the cases are based on metrics associated with the information concept.     

\subsection{A family of bounds using R\'enyi divergences}\label{sec:renyi}

Equation \eqref{eq:ratio} can be written as:
\begin{align}
\widetilde{\text{CE}}_p(\mathbf{h})-\text{CE}_{S}(\mathbf{h})&=\frac{1}{n}\log\frac{q(S|\mathbf{h})}{p(S|\mathbf{h})}+\frac{1}{n}\log\frac{p(S|\mathbf{h})}{p(S)}\label{eq:2ratio-bis}\\
&=\frac{1}{n}\log\frac{q(S|\mathbf{h})}{p(S|\mathbf{h})}+\frac{1}{n}\log\frac{p(\mathbf{h}|S)}{p(\mathbf{h})}.\label{eq:2ratio}
\end{align}
The first term of \eqref{eq:2ratio} consider, conditional to a classifier $\mathbf{h}\in\mathcal{H}$, the likelihood of the distribution of every sample with respect to the distribution of the well-classified samples. The second term the mutual information density rate \cite{Han_2002} of $(\mathbf{h},S)$, which, in some sense measure the common information content between $\mathbf{h}$ and $S$. We have the following theorem:

%has its Bayesian correlate: it compares the prior information with the posterior one. With this decomposition we can enunciate the following result.

\begin{theorem}\label{thm:bayes}
Let $\alpha\in(0,1)\cup(1,+\infty)$ and $\beta\in(1,+\infty)$, for every $\delta\in(0,1]$ we have,
\begin{align}
    \mathop{\mathbb{P}}_{\substack{S\sim p(S)\\\mathbf{h}\sim p(\mathbf{h}|S)}}&\left(\widetilde{\textup{CE}}_p(\mathbf{h})\leq\textup{CE}_{S}(\mathbf{h})+\frac{\alpha-1}{\alpha n}D_\alpha(q(S|\mathbf{h})\|p(S|\mathbf{h}))+\frac{D_\beta(p(\mathbf{h}|S)\|p(\mathbf{h}))}{n}+\right.\nonumber\\
&\qquad\left.+\left(\frac{1}{\alpha n}+\frac{1}{(\beta-1)n}\right)\log\frac{2}{\delta}\right)\geq 1-\delta.\label{eq:bayes}    
\end{align}

\end{theorem}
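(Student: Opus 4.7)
The plan is to split the gap using the decomposition already exhibited in equation (\ref{eq:2ratio}) and bound the two log-likelihood ratios on the right-hand side by two separate applications of a Markov-style tail inequality tuned to Rényi divergences, with each bad event receiving probability mass $\delta/2$; a union bound then reproduces (\ref{eq:bayes}). The recurring tool will be the identity $\mathop{\mathbb{E}}_{\mu}\!\bigl[(\nu/\mu)^{t}\bigr]=\exp\bigl((t-1)D_{t}(\nu\|\mu)\bigr)$, which is essentially the definition of $D_{t}$.

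For the first term $\tfrac{1}{n}\log\tfrac{q(S|\mathbf{h})}{p(S|\mathbf{h})}$ I would condition on $\mathbf{h}$, so that $S\sim p(S|\mathbf{h})$, set $V=q(S|\mathbf{h})/p(S|\mathbf{h})$, and apply Markov's inequality to $V^{\alpha}$. This yields $\mathop{\mathbb{P}}_{p(S|\mathbf{h})}(\log V>t(\mathbf{h}))\le\exp\bigl((\alpha-1)D_{\alpha}(q(S|\mathbf{h})\|p(S|\mathbf{h}))-\alpha\,t(\mathbf{h})\bigr)$, and choosing $t(\mathbf{h})=\tfrac{\alpha-1}{\alpha}D_{\alpha}(q(S|\mathbf{h})\|p(S|\mathbf{h}))+\tfrac{1}{\alpha}\log\tfrac{2}{\delta}$ makes this conditional probability at most $\delta/2$. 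Averaging over $\mathbf{h}\sim p(\mathbf{h})$ preserves the bound and, since $p(\mathbf{h})p(S|\mathbf{h})=p(S)p(\mathbf{h}|S)$, transfers it to the joint law used in the statement.

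The second term $\tfrac{1}{n}\log\tfrac{p(\mathbf{h}|S)}{p(\mathbf{h})}$ is handled symmetrically: conditioning now on $S$ so that $\mathbf{h}\sim p(\mathbf{h}|S)$, and setting $W=p(\mathbf{h}|S)/p(\mathbf{h})$, Markov applied to $W^{\beta-1}$ uses the conditional moment $\mathop{\mathbb{E}}_{p(\mathbf{h}|S)}[W^{\beta-1}]=\int p(\mathbf{h}|S)^{\beta}p(\mathbf{h})^{1-\beta}\,d\mathbf{h}=\exp\bigl((\beta-1)D_{\beta}(p(\mathbf{h}|S)\|p(\mathbf{h}))\bigr)$. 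Selecting the threshold $u(S)=D_{\beta}(p(\mathbf{h}|S)\|p(\mathbf{h}))+\tfrac{1}{\beta-1}\log\tfrac{2}{\delta}$ again bounds the conditional tail by $\delta/2$, and averaging over $S$ gives the unconditional statement. Taking the union of the two bad events, dividing by $n$, and adding the two bounds to the decomposition (\ref{eq:2ratio}) produces exactly (\ref{eq:bayes}).

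The one subtlety I expect to need care is the regime $\alpha\in(0,1)$, where $\tfrac{\alpha-1}{\alpha}<0$ and the Rényi contribution becomes a negative correction rather than a positive penalty; Markov still applies because $\alpha>0$, and the factor $(\alpha-1)$ appearing in the definition of $D_{\alpha}$ absorbs the sign, so the same calculation covers both $\alpha\in(0,1)$ and $\alpha>1$ without a case split. Apart from this unified bookkeeping, the argument is a direct change-of-measure plus Markov's inequality, and no further ingredients should be required.
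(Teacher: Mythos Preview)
Your proposal is correct and matches the paper's proof essentially line for line: the paper also splits via (\ref{eq:2ratio}), applies Chernoff/Markov conditionally on $\mathbf{h}$ with exponent $\alpha n$ for the first ratio and conditionally on $S$ with exponent $(\beta-1)n$ for the second, identifies the resulting moments as $e^{(\alpha-1)D_\alpha}$ and $e^{(\beta-1)D_\beta}$, and finishes with the same $\delta/2$--$\delta/2$ union bound. Your remark about the $\alpha\in(0,1)$ case needing no separate treatment is also in line with the paper, which handles both ranges uniformly.
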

%{\color{red}Una vez que se observa un $h$, interpretamos a $p(S|h)$ como la distribución de los datos de entrenamiento y sabemos que $q(S|h))$ está asociada a la distribución de los datos que serán clasificados correctamente por $h$. En ese sentido un bun algoritmo debería diferenciar la ley de probabilidad de los datos de entrenamiento con respecto a los de los datos clasificados correctamente, ya que se pretende que datos fuera del entrenamiento también sean clasificados correctamente.}

The proof is relegated to the Appendix \ref{app:bayes}. As (\ref{eq:2ratio}) suggests, we see that the PACMAN bound is separated on two factors: the first, $D_\alpha(q(S|\mathbf{h})\|p(S|\mathbf{h}))$, it is related with the ability of the selected classifier $\mathbf{h}\in\mathcal{H}$ of classifying correctly the samples in $S$ (which are also used for obtaining $\mathbf{h}$). As R\'enyi divergences are positive, an interesting case is when $\alpha<1$, where we see that this term is multiplied by a negative number and could have some beneficial effect. Using that \cite{erven14}:
\begin{equation}
\frac{\alpha}{2}V^2(q(S|\mathbf{h}),p(S|\mathbf{h}))\leq D_\alpha(q(S|\mathbf{h})\|p(S|\mathbf{h})),\quad \forall\; \mathbf{h}\in\mathcal{H},\ \alpha\in(0,1]
    \label{eq:Pinsker}
\end{equation}
where $V(\cdot,\cdot)$ is the total variation distance we see that if the distribution of samples $S$ conditional to the obtained classifier $\mathbf{h}$ is not similar to the distribution of the samples conditioned on the event that are correctly classified using the same $\mathbf{h}$. Qualitatively, when $q(S|\mathbf{h})$ and $p(S|\mathbf{h})$ are not close to each other, a good learning algorithm should not totally look for the correct classification of the training data set which can be see as an overfitting mitigation. The other term, is easier to interpret:  when $p(\mathbf{h}|S)$ is not very different to $p(\mathbf{h})$ the penalization that it introduce will be smaller. This is linked with the well known fact that good generalization capabilities are related with situations in which the output of the learning algorithm is not extremely sensitive to input $S$.
%measures an equivalent of the ratio \eqref{eq:ratio} in terms of the Bayesian hypothesis (from $p(S|\mathbf{h})$), and the second one $\textup{CS}(S)$ penalizes the result with the prior $p(\mathbf{h})$.
The case with $\alpha=\frac{1}{2}$ and $\beta=2$ is particularly interesting. We have the following corollary (whose proof can be found in Appendix \ref{app:bayes})

\begin{corollary}\label{cor:1}
When $\alpha=\frac{1}{2}$ and $\beta=2$, (\ref{eq:bayes}) can be written as:
	\begin{equation}\label{eq:hellchi}
	\mathop{\mathbb{P}}_{\substack{S\sim p(S)\\\mathbf{h}\sim{p(\mathbf{h}|S)}}}\left(\widetilde{\textup{CE}}_p(\mathbf{h})\leq\textup{CE}_{S}(\mathbf{h})+\frac{2}{n}\log\left(1-\frac{H(\mathbf{h})}{2}\right)+\frac{1}{n}\log\left(1+\textup{CS(S)}\right)+\frac{3}{n}\log\frac{2}{\delta}\right)\geq1-\delta,
	\end{equation}
	where 
	\begin{align}\label{eq:H-CSdefs}
	H(\mathbf{h})&=\textup{Hel}^2(q(S|\mathbf{h}),p(S|\mathbf{h})),\qquad \textup{CS}(S)=\chi^2(p(\mathbf{h}|S)\|p(\mathbf{h})).
	\end{align}
\end{corollary}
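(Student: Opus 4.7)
The plan is to simply specialize Theorem \ref{thm:bayes} at $(\alpha,\beta)=(1/2,2)$ and rewrite the two R\'enyi divergences appearing in the bound in closed form, in terms of $\textup{Hel}^2$ and $\chi^2$ respectively. All the work is purely algebraic identities between divergences, plus bookkeeping of constants.

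First, I will compute the constants in front of each term. With $\alpha=1/2$,
\[
\frac{\alpha-1}{\alpha n}=-\frac{1}{n},\qquad \frac{1}{\alpha n}=\frac{2}{n},
\]
and with $\beta=2$,
\[
\frac{1}{(\beta-1)n}=\frac{1}{n}.
\]
Thus the additive coefficient of $\log(2/\delta)$ in \eqref{eq:bayes} becomes $\frac{2}{n}+\frac{1}{n}=\frac{3}{n}$, matching the claimed constant.

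Next I will identify the two R\'enyi divergences. Using the definition of $D_\alpha$ together with the Hellinger definition of the preamble,
\[
\mathop{\mathbb{E}}_{S\sim p(S\mid\mathbf{h})}\!\left[\left(\tfrac{q(S\mid\mathbf{h})}{p(S\mid\mathbf{h})}\right)^{1/2}\right]
=\mathop{\mathbb{E}}_{S\sim q(S\mid\mathbf{h})}\!\left[\sqrt{\tfrac{p(S\mid\mathbf{h})}{q(S\mid\mathbf{h})}}\right]
=1-\tfrac{1}{2}\textup{Hel}^2\bigl(q(S\mid\mathbf{h}),p(S\mid\mathbf{h})\bigr),
\]
so that
\[
D_{1/2}\bigl(q(S\mid\mathbf{h})\|p(S\mid\mathbf{h})\bigr)=\frac{1}{1/2-1}\log\!\left(1-\tfrac{H(\mathbf{h})}{2}\right)=-2\log\!\left(1-\tfrac{H(\mathbf{h})}{2}\right).
\]
Multiplying by $\frac{\alpha-1}{\alpha n}=-\frac{1}{n}$ produces exactly the term $\frac{2}{n}\log(1-H(\mathbf{h})/2)$ that appears in \eqref{eq:hellchi} (and which is non‑positive, as it should be). Analogously, using the identity $\mathbb{E}_{p(\mathbf{h})}[(p(\mathbf{h}\mid S)/p(\mathbf{h}))^2]=1+\chi^2(p(\mathbf{h}\mid S)\|p(\mathbf{h}))$, I obtain
\[
D_2\bigl(p(\mathbf{h}\mid S)\|p(\mathbf{h})\bigr)=\log\bigl(1+\textup{CS}(S)\bigr),
\]
giving the second term of \eqref{eq:hellchi} after division by $n$.

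Finally, substituting these two closed‑form expressions and the computed constants into \eqref{eq:bayes} yields \eqref{eq:hellchi} exactly. I do not anticipate any genuine obstacle here: the only minor care needed is keeping track of the sign, since $D_{1/2}$ is non‑negative while $\frac{\alpha-1}{\alpha n}$ is negative for $\alpha=1/2$, which is what turns the Hellinger penalty into the non‑positive log factor $\log(1-H(\mathbf{h})/2)$ in the statement. No additional probabilistic argument is required beyond invoking Theorem \ref{thm:bayes}.
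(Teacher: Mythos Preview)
Your proposal is correct and follows essentially the same route as the paper: specialize Theorem~\ref{thm:bayes} at $(\alpha,\beta)=(1/2,2)$ and use the standard identities $D_{1/2}(q\|p)=-2\log\bigl(1-\tfrac{1}{2}\textup{Hel}^2(q,p)\bigr)$ and $D_2(p\|q)=\log\bigl(1+\chi^2(p\|q)\bigr)$, together with the arithmetic on the constants $\frac{\alpha-1}{\alpha n}$, $\frac{1}{\alpha n}$, $\frac{1}{(\beta-1)n}$. The paper's proof in Appendix~\ref{app:bayes} does exactly this, so there is nothing to add.
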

In this case, in line with the above explanation, $\textup{CS}(S)$ is an information-theoretic quantity, similar to others in the bibliography, that represents the overfitting effect generated when the choice of $\mathbf{h}$ is sensitive to the training set $S$. The term $H(\mathbf{h})$ penalizes classifiers which classifies correctly all the training set. It is clear that the worst-case for the $\frac{2}{n}\log\left(1-\frac{H(\mathbf{h})}{2}\right)$ term is when $H(\mathbf{h})=0$, which would implies overfitting with respect to $S$. However, if even in that case, the remaining term $\textup{CS}(S)$ is small, good generalization capabilities could be achieved. In fact, using Theo. \ref{thm:viallard} we can obtain the following corollary: 

\begin{corollary}\label{cor:viallard}
Let $\beta>1$, for all $\delta\in(0,1]$ and any prior $\tilde{p}(\mathbf{h})$:
\begin{equation}
    \mathop{\mathbb{P}}_{\substack{S\sim p(S)\\\mathbf{h}\sim p(\mathbf{h}|S)}}\left(\widetilde{\textup{CE}}_p(\mathbf{h})\leq\textup{CE}_{S}(\mathbf{h})+\frac1n\left[{D}_\beta(p(\mathbf{h}|S)\|\tilde{p}(\mathbf{h}))+\left(1+\frac{\beta}{\beta-1}\right)\log\frac2\delta\right]\right)\geq 1-\delta.    
\end{equation}
In particular for $\beta=2$ and $\tilde{p}(\mathbf{h})={p}(\mathbf{h})$
\begin{equation}\label{eq:chiviallard}
    \mathop{\mathbb{P}}_{\substack{S\sim p(S)\\\mathbf{h}\sim p(\mathbf{h}|S)}}\left(\widetilde{\textup{CE}}_p(\mathbf{h})\leq\textup{CE}_{S}(\mathbf{h})+\frac1n\log\left(1+\textup{CS}(S))\right)+\frac3n\log\frac2\delta\right)\geq 1-\delta.    
\end{equation}
\end{corollary}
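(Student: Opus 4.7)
The plan is to obtain this corollary as a direct application of Theorem \ref{thm:viallard} with the tailor-made choice
\[
\phi(\mathbf{h},S)\;=\;\frac{q(S|\mathbf{h})}{p(S)},
\]
where $q(S|\mathbf{h})=\prod_{(x,y)\in S}q(x,y|\mathbf{h})$ is the product distribution introduced in \eqref{eq:qmed}. The key observation that connects this $\phi$ to the PACMAN gap is identity \eqref{eq:ratio}, which gives
\[
\log\phi(\mathbf{h},S)\;=\;\log\frac{q(S|\mathbf{h})}{p(S)}\;=\;n\bigl(\widetilde{\textup{CE}}_p(\mathbf{h})-\textup{CE}_S(\mathbf{h})\bigr).
\]
Thus any high-probability upper bound on $\log\phi(\mathbf{h},S)$ translates, after dividing by $n$, into a PACMAN bound of the advertised form.

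First I would verify that $\phi(\mathbf{h},S)$ lies in $(0,+\infty)$ (which is needed for Theorem \ref{thm:viallard} to apply); this is immediate because, by \eqref{eq:qmed}, $q(x,y|\mathbf{h})$ is absolutely continuous with respect to $p(x,y)$ and strictly positive wherever $p(x,y)>0$, so the same holds for the $n$-fold products. Then I would compute the normalising expectation inside the logarithm of Theorem \ref{thm:viallard}, namely
\[
\mathop{\mathbb{E}}_{\substack{S'\sim p(S)\\\mathbf{h}'\sim \tilde{p}(\mathbf{h})}}\!\!\bigl[\phi(\mathbf{h}',S')\bigr]
=\mathop{\mathbb{E}}_{\mathbf{h}'\sim\tilde p(\mathbf h)}\!\!\left[\,\int p(S')\frac{q(S'|\mathbf{h}')}{p(S')}\,dS'\right]
=\mathop{\mathbb{E}}_{\mathbf{h}'\sim\tilde p(\mathbf h)}\!\!\bigl[1\bigr]=1,
\]
using that $q(\cdot|\mathbf{h}')$ is a genuine probability distribution on $(\mathcal{X}\times\mathcal{Y})^n$ (its normalisation follows from $\int q(x,y|\mathbf{h}')\,dx\,dy=\mathbb{E}_p[(\mathbf{h}'(x))_y]/(1-R_p(\mathbf{h}'))=1$). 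With this, Theorem \ref{thm:viallard} applied with $\alpha=\beta$ collapses to
\[
\log\phi(\mathbf{h},S)\;\leq\;D_\beta(p(\mathbf{h}|S)\|\tilde p(\mathbf{h}))+\Bigl(1+\tfrac{\beta}{\beta-1}\Bigr)\log\tfrac{2}{\delta}
\]
with probability at least $1-\delta$. Dividing by $n$ and substituting the identity above for $\log\phi$ yields the first statement of the corollary.

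For the particular case $\beta=2$ and $\tilde p(\mathbf{h})=p(\mathbf{h})$, the final step is the standard identification
\[
D_2\bigl(p(\mathbf{h}|S)\,\|\,p(\mathbf{h})\bigr)
=\log\mathop{\mathbb{E}}_{\mathbf{h}\sim p(\mathbf{h})}\!\!\left[\left(\tfrac{p(\mathbf{h}|S)}{p(\mathbf{h})}\right)^{2}\right]
=\log\bigl(1+\chi^{2}(p(\mathbf{h}|S)\,\|\,p(\mathbf{h}))\bigr)
=\log(1+\textup{CS}(S)),
\]
together with $1+\beta/(\beta-1)=3$, which recovers \eqref{eq:chiviallard}. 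There is no real obstacle in the argument; the only subtle point is the probabilistic-normalisation check that makes $\mathbb{E}[\phi]=1$, and hence the seemingly mysterious absence of the empirical risk from the constant, so I would take care to justify that identity explicitly (noting where the i.i.d. structure of $p(S)$ and the construction of $q(S|\mathbf{h})$ are used).
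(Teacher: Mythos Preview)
Your proposal is correct and essentially identical to the paper's own proof: the paper also applies Theorem~\ref{thm:viallard} with $\phi(\mathbf{h},S)=e^{n(\widetilde{\textup{CE}}_p(\mathbf{h})-\textup{CE}_S(\mathbf{h}))}$, which by \eqref{eq:ratio} is exactly your $q(S|\mathbf{h})/p(S)$, and then uses $\mathbb{E}_{S\sim p(S)}[\phi(\mathbf{h},S)]=1$. Your write-up is in fact more detailed than the paper's one-line proof, since you spell out the positivity check and the normalisation of $q(\cdot|\mathbf{h})$.
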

\begin{proof}
Using $\phi(\mathbf{h},S)=e^{n\left(\widetilde{\textup{CE}}_p(\mathbf{h})-\textup{CE}_{S}(\mathbf{h})\right)}$, \eqref{eq:ratio} implies  $\displaystyle\mathop{\mathbb{E}}_{S\sim p(S)}\left[\phi(\mathbf{h},S)\right]=1$ for all $\mathbf{h}\in{\mathcal{H}}$ and the corollary is proved.
\end{proof}

We see that the PACMAN bound in Theo. \ref{thm:bayes}, with $\alpha<1$, is tighter than the one obtained in Corollary \ref{cor:viallard}, showing that our analysis is able to recover the negative term  $\frac{2}{n}\log\left(1-\frac{H(\mathbf{h})}{2}\right)$. 

Consider now Theo. \ref{thm:esposito}. Using the data-processing inequality $I_\alpha(S;\mathbf{h})\geq I_\alpha(S;h)$, which is valid because $h(x)=f(\mathbf{h}(x))$ \cite{6832827}, we can write:
\begin{align}
\mathop{\mathbb{P}}_{\substack{S\sim p(S)\\\mathbf{h}\sim p(\mathbf{h}|S)}}&\left(R_p^\ast(h)\leq R_S^\ast(h)+ \sqrt{\frac{1}{2n}\left({I}_\alpha(S;\mathbf{h})+\log(2)-\frac{\alpha}{\alpha-1}\log\delta\right)}\right)\geq 1-\delta.\label{eq:cota1comp}\\   
%\mathop{\mathbb{P}}_{\substack{S\sim p(S)\\\mathbf{h}\sim p(\mathbf{h}|S)}}&\left(R_p^\ast(h)\leq R_S^\ast(h)+ \sqrt{\frac{{I}(S;\mathbf{h})+1+\delta}{2n\delta}}\right)\geq 1-\delta\label{eq:cota2comp}
%$I(S;\mathbf{h})\geq I(S;h)$
\end{align}
Choosing $\alpha=2$ it is easy to see that:
\begin{align}
   I_2(S;\mathbf{h})&=\min_{\tilde{p}(\mathbf{h})}\mathop{\mathbb{E}}_{S\sim p(S)}\left[D_2(p(\mathbf{h}|S)\|\tilde{p}(\mathbf{h}))\right]\\
   &=\min_{\tilde{p}(\mathbf{h})}\mathop{\mathbb{E}}_{S\sim p(S)}\left[\log(1+\chi^2\left(p(\mathbf{h}|S)\|\tilde{p}(\mathbf{h})\right))\right]\\
  &\leq \mathop{\mathbb{E}}_{S\sim p(S)}\left[\log(1+\textup{CS}(S))\right].
\end{align}

We see that in the setting of Theo. \ref{thm:esposito}, where the same loss function is used at the training and testing phases the term $\textup{CS}(S)$ have a similar influence. However, there are some differences. In \eqref{eq:hellchi} the actual value of $\textup{CS}(S)$, and not its expectation, is part of the bound\footnote{It is not difficult to see that with a simple modification of the arguments given in Appendix \ref{app:bayes} the term $\mathop{\mathbb{E}}_{S\sim p(S)}\left[\log(1+\textup{CS}(S))\right]$ can be included in the right-hand side of \eqref{eq:hellchi}.}.  Besides that, we see that that in \eqref{eq:hellchi} a better scaling with $n$ is obtained. Finally, in \eqref{eq:hellchi} we have the presence of the negative term $\frac{2}{n}\log\left(1-\frac{H(\mathbf{h})}{2}\right)$.

%Lemma \ref{lem:paracomparar} allow us to compare PAC point-wise bounds with the PACMAN ones. On the one hand \eqref{eq:cota1comp} has a smaller chi-square term than \eqref{eq:hellchi} but with a worst scaling ($n^{-1/2}$ instead of $n^{-1}$). On the other hand,  \eqref{eq:hellchi} has the negative Hellinger term previously mentioned. In brief, the strengths of Theo. \ref{thm:bayes} are: the scaling $n^{-1}$ and the novel Hellinger term (which tights the inequality).

\subsection{Learners with Little Information}\label{sec:little}

Theo. \ref{thm:bassily} from \cite{pmlr-v83-bassily18a} shows that learners which share little information with the sample $S$ generalizes well when the $0-1$ loss is considered at training and testing. Let us consider the following lemma:
	\begin{lemma}\cite[Lemma 9]{pmlr-v83-bassily18a}\label{lem:littleinformation}
		Let us consider $p(x)$, $q(x)$ the induced distributions from the random variable $x$ and a measurable event $E$, then $\displaystyle \mathop{\mathbb{P}}_{x\sim p(x)}(x\in E) \leq \frac{\textup{KL}(p(x)\|q(x))+1}{-\log \mathop{\mathbb{P}}_{x\sim q(x)}(x\in E)}$.
	\end{lemma}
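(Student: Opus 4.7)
\medskip

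The plan is to reduce the statement to the data-processing inequality for Kullback--Leibler divergence applied to the binary partition induced by the event $E$, and then to bound the resulting binary entropy by $1$.

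First I would set $a = \mathop{\mathbb{P}}_{x\sim p(x)}(x\in E)$ and $b = \mathop{\mathbb{P}}_{x\sim q(x)}(x\in E)$, so that the conclusion we wish to prove reads $a\,(-\log b) \leq \textup{KL}(p\|q)+1$. Since $(E,E^c)$ is a measurable partition, the data-processing inequality (equivalently, the log-sum inequality applied to this two-cell partition) yields
\begin{equation*}
\textup{KL}(p(x)\|q(x)) \;\geq\; a\log\frac{a}{b} + (1-a)\log\frac{1-a}{1-b}.
\end{equation*}
Expanding the right-hand side and collecting terms gives
\begin{equation*}
\textup{KL}(p(x)\|q(x)) \;\geq\; -H(a) \;-\; a\log b \;-\; (1-a)\log(1-b),
\end{equation*}
where $H(a)=-a\log a -(1-a)\log(1-a)$ is the binary entropy in nats.

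Next I would drop the term $-(1-a)\log(1-b)$, which is non-negative because $b\in[0,1]$ implies $\log(1-b)\leq 0$ and $(1-a)\geq 0$. This gives the clean inequality
\begin{equation*}
a\,(-\log b) \;\leq\; \textup{KL}(p(x)\|q(x)) + H(a).
\end{equation*}
To finish, I would invoke the elementary bound $H(a)\leq \log 2 < 1$ (natural log), substitute $a,b$ back, and divide by $-\log b$ (which is non-negative, with the usual convention that the bound is vacuous if $b=1$).

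The argument is essentially a one-line reduction to data processing, so there is no real obstacle; the only mildly delicate point is the bookkeeping with the negative log and making sure to discard the term $-(1-a)\log(1-b)$ rather than try to keep it, since retaining it would yield a joint inequality in $a$ and $b$ that does not factor into the clean quotient form stated in the lemma.
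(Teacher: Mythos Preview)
Your proof is correct. Note, however, that the paper does not supply its own proof of this lemma: it is quoted verbatim from \cite[Lemma 9]{pmlr-v83-bassily18a} and used as a black box in the proof of Theorem~\ref{thm:little}. Your argument via the data-processing inequality for $\textup{KL}$ on the two-cell partition $\{E,E^c\}$, followed by discarding the non-negative term $-(1-a)\log(1-b)$ and bounding the binary entropy by $\log 2<1$, is the standard route to this inequality and is essentially how it is established in the cited reference.
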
	
Using this lemma we can obtain another PACMAN bound as follows:

\begin{theorem}\label{thm:little}
	For every $\delta\in(0,1]$ we have,
	\begin{equation}\label{eq:littleinf}
	\mathop{\mathbb{P}}_{\substack{S\sim p(S)\\\mathbf{h}\sim{p(\mathbf{h}|S)}}}\left(\widetilde{\textup{CE}}_p(\mathbf{h})\leq\textup{CE}_{S}(\mathbf{h})+\frac{I(S;\mathbf{h})+1}{n\delta}\right)\geq1-\delta.
	\end{equation}
\end{theorem}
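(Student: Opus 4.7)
The plan is to leverage the likelihood-ratio identity \eqref{eq:ratio}, namely $\widetilde{\textup{CE}}_p(\mathbf{h})-\textup{CE}_{S}(\mathbf{h}) = \frac{1}{n}\log\frac{q(S|\mathbf{h})}{p(S)}$, and reduce the statement to a tail bound on this log-ratio under the joint distribution $p(S,\mathbf{h})=p(S)p(\mathbf{h}|S)$. Specifically, for any $t>0$, define the bad event
\[
E_t=\Bigl\{(S,\mathbf{h}):\,\log\tfrac{q(S|\mathbf{h})}{p(S)}>t\Bigr\},
\]
so that it suffices to show $\mathbb{P}_{p(S,\mathbf{h})}(E_t)\leq\delta$ for the choice $t=(I(S;\mathbf{h})+1)/\delta$.

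The key step is to invoke Lemma~\ref{lem:littleinformation} with $p(x)=p(S,\mathbf{h})$ and the reference measure $q(x)=p(S)p(\mathbf{h})$, i.e.\ the product of marginals. This choice is exactly the one for which $\textup{KL}(p(S,\mathbf{h})\,\|\,p(S)p(\mathbf{h}))=I(S;\mathbf{h})$, giving
\[
\mathbb{P}_{p(S,\mathbf{h})}(E_t)\leq\frac{I(S;\mathbf{h})+1}{-\log\mathbb{P}_{p(S)p(\mathbf{h})}(E_t)}.
\]

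To lower-bound the denominator by $t$, note that for each fixed $\mathbf{h}$ the ratio $q(S|\mathbf{h})/p(S)$ has expectation one when $S\sim p(S)$, since $q(\cdot|\mathbf{h})$ is absolutely continuous with respect to $p$ (see equation \eqref{eq:qmed}) and integrates to one. Markov's inequality then yields $\mathbb{P}_{S\sim p(S)}\!\bigl(q(S|\mathbf{h})/p(S)>e^t\bigr)\leq e^{-t}$ for every $\mathbf{h}$, and averaging over $\mathbf{h}\sim p(\mathbf{h})$ gives $\mathbb{P}_{p(S)p(\mathbf{h})}(E_t)\leq e^{-t}$, hence $-\log\mathbb{P}_{p(S)p(\mathbf{h})}(E_t)\geq t$. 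Combining the two inequalities, $\mathbb{P}_{p(S,\mathbf{h})}(E_t)\leq (I(S;\mathbf{h})+1)/t$, and taking $t=(I(S;\mathbf{h})+1)/\delta$ closes the argument via the equivalence $\{\widetilde{\textup{CE}}_p(\mathbf{h})-\textup{CE}_S(\mathbf{h})>t/n\}=E_t$.

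The only conceptual choice — and the one small obstacle — is identifying the correct reference distribution in Lemma~\ref{lem:littleinformation}: it must simultaneously produce the mutual information in the numerator (hence the product of marginals) and allow a clean Markov-type tail bound for $q(S|\mathbf{h})/p(S)$ under the reference. Once the product measure $p(S)p(\mathbf{h})$ is selected, both requirements are met and the remaining calculation is routine.
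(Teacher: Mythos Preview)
Your proof is correct and follows essentially the same approach as the paper: invoke Lemma~\ref{lem:littleinformation} to change measure, then bound the tail under the reference via Markov on the unit-mean ratio $q(S|\mathbf{h})/p(S)$. The only cosmetic difference is that you apply the lemma once at the level of the joint $(S,\mathbf{h})$ with reference $p(S)p(\mathbf{h})$, whereas the paper applies it conditionally on each $\mathbf{h}$ (with reference $p(S)$) and then averages over $\mathbf{h}\sim p(\mathbf{h})$ to produce $I(S;\mathbf{h})$; both routes yield the identical bound.
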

\begin{proof}
We can use Lemma \ref{lem:littleinformation} to write:
	\begin{align}
	\mathop{\mathbb{P}}_{S\sim{p(S|\mathbf{h})}}\left(\widetilde{\textup{CE}}_p(\mathbf{h})-\textup{CE}_{S}(\mathbf{h})>\epsilon\right)\leq\frac{\textup{KL}(p(S|\mathbf{h})\|p(S))+1}{\displaystyle-\log \mathop{\mathbb{P}}_{S\sim{p(S)}}\left(\widetilde{\textup{CE}}_p(\mathbf{h})-\textup{CE}_{S}(\mathbf{h})>\epsilon\right)}.\label{eq:little}
	\end{align}
	Using Chernoff bound we can write:
	\begin{align*}
	    \mathop{\mathbb{P}}_{S\sim{p(S)}}\left(\widetilde{\textup{CE}}_p(\mathbf{h})-\textup{CE}_{S}(\mathbf{h})>\epsilon\right)&\leq\inf_{t>0} e^{-t\epsilon}\mathop{\mathbb{E}}_{S\sim{p(S)}}\left[\left(\frac{q(S|\mathbf{h})}{p(S)}\right)^{t/n}\right],
	\end{align*}
and re-bounding with $t=n$ we get:
	\begin{align}
\mathop{\mathbb{P}}_{S\sim{p(S)}}\left(\widetilde{\textup{CE}}_p(\mathbf{h})-\textup{CE}_{S}(\mathbf{h})>\epsilon\right)&\leq e^{-n\epsilon}\mathop{\mathbb{E}}_{S\sim{p(S)}}\left[\frac{q(S|\mathbf{h})}{p(S)}\right]=e^{-n\epsilon}.
\label{eq:bound_aux}
	\end{align}
Using (\ref{eq:little}) in (\ref{eq:bound_aux}) and taking expectation in \eqref{eq:little} over $\mathbf{h}\sim p(\mathbf{h})$:
	\begin{align}
	\mathop{\mathbb{P}}_{\substack{\mathbf{h}\sim p(\mathbf{h})\\S\sim{p(S|\mathbf{h})}}}\left(\widetilde{\textup{CE}}_p(\mathbf{h})-\textup{CE}_{S}(\mathbf{h})>\epsilon\right)&\leq\frac{I(S;\mathbf{h})+1}{n\epsilon}=\delta,
	\end{align}
	from which the result is straightforward.
\end{proof}

Theo. \ref{thm:bassily} can be relaxed using the data-processing inequality where $I(S;\mathbf{h})\geq I(S;h)$ which is valid because $h(x)=f(\mathbf{h}(x))$ \cite{6832827}:
\begin{align}
\mathop{\mathbb{P}}_{\substack{S\sim p(S)\\\mathbf{h}\sim p(\mathbf{h}|S)}}&\left(R_p^\ast(h)\leq R_S^\ast(h)+ \sqrt{\frac{{I}(S;\mathbf{h})+1+\delta}{2n\delta}}\right)\geq 1-\delta.\label{eq:cota2comp}
\end{align}

Despite the fact that (\ref{eq:cota2comp}) considers the true and the empirical risks of the $0-1$ loss and (\ref{eq:littleinf}) is PACMAN bound for risk $R_p(\mathbf{h})$ (and empirical negative log-loss at training), both show that mutual information  ${I}(S;\mathbf{h})$ has a important effect on generalization. The main difference is that (\ref{eq:littleinf}) presents a faster scaling $n^{-1}$ instead of $n^{-1/2}$.

The following lemma show mutual information ${I}(S;\mathbf{h})$ can be used to bound the expectation of $\widetilde{\textup{CE}}_p(\mathbf{h})-\textup{CE}_{S}(\mathbf{h})$:

\begin{lemma}\label{lem:esperanzaesinformacion}
\begin{align}\label{eq:mutualinformation}
\mathop{\mathbb{E}}_{\substack{S\sim p(S)\\\mathbf{h}\sim{p(\mathbf{h}|S)}}}&\left[\widetilde{\textup{CE}}_p(\mathbf{h})-\textup{CE}_{S}(\mathbf{h})\right]\leq\frac{I(S;\mathbf{h})}{n}.
\end{align}
\end{lemma}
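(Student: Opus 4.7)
The plan is to leverage the identity already established in equation (\ref{eq:ratio}),
\begin{equation*}
\widetilde{\text{CE}}_p(\mathbf{h})-\text{CE}_{S}(\mathbf{h})=\frac{1}{n}\log\frac{q(S|\mathbf{h})}{p(S)},
\end{equation*}
so that the expectation on the left-hand side of (\ref{eq:mutualinformation}) becomes $\frac{1}{n}\mathbb{E}_{p(S,\mathbf{h})}\!\left[\log\frac{q(S|\mathbf{h})}{p(S)}\right]$, where the joint distribution is $p(S,\mathbf{h})=p(S)p(\mathbf{h}|S)=p(\mathbf{h})p(S|\mathbf{h})$.

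Next, I would use exactly the decomposition introduced in (\ref{eq:2ratio-bis})--(\ref{eq:2ratio}):
\begin{equation*}
\log\frac{q(S|\mathbf{h})}{p(S)}=\log\frac{q(S|\mathbf{h})}{p(S|\mathbf{h})}+\log\frac{p(S|\mathbf{h})}{p(S)}.
\end{equation*}
Taking expectations term by term: the second term gives $\mathbb{E}_{p(S,\mathbf{h})}\!\left[\log\frac{p(S|\mathbf{h})}{p(S)}\right]=I(S;\mathbf{h})$ by the definition of mutual information. For the first term, conditioning on $\mathbf{h}$ and integrating $S$ under $p(S|\mathbf{h})$ yields
\begin{equation*}
\mathbb{E}_{p(S,\mathbf{h})}\!\left[\log\frac{q(S|\mathbf{h})}{p(S|\mathbf{h})}\right]=-\mathbb{E}_{\mathbf{h}\sim p(\mathbf{h})}\!\left[\textup{KL}(p(S|\mathbf{h})\|q(S|\mathbf{h}))\right]\leq 0,
\end{equation*}
where the inequality uses the non-negativity of KL-divergence (and absolute continuity of $q(\cdot|\mathbf{h})$ with respect to $p(\cdot|\mathbf{h})$, which is guaranteed by the absolute continuity of $q(x,y|\mathbf{h})$ with respect to $p(x,y)$ established right after (\ref{eq:qmed})). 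Adding the two contributions and dividing by $n$ gives the claimed bound.

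There is no real obstacle here: the proof is essentially a one-line manipulation of (\ref{eq:ratio}) combined with the non-negativity of a KL divergence. The only subtlety worth noting is to be careful that the expectation of $\log(p(S|\mathbf{h})/p(S))$ under the joint $p(S,\mathbf{h})$ is indeed the mutual information $I(S;\mathbf{h})$ (rather than a conditional KL), which is immediate from writing $p(S|\mathbf{h})/p(S)=p(\mathbf{h}|S)/p(\mathbf{h})$ and using the standard definition.
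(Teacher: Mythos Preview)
Your proof is correct and follows essentially the same route as the paper: both use the decomposition (\ref{eq:2ratio-bis}), identify the expectation of $\log\frac{p(S|\mathbf{h})}{p(S)}$ under the joint law as $I(S;\mathbf{h})$, and drop the term $-\mathbb{E}_{\mathbf{h}}[\textup{KL}(p(S|\mathbf{h})\|q(S|\mathbf{h}))]$ via non-negativity of KL. The only quibble is your parenthetical about absolute continuity: for $\textup{KL}(p(S|\mathbf{h})\|q(S|\mathbf{h}))$ to be finite you would need $p(\cdot|\mathbf{h})\ll q(\cdot|\mathbf{h})$ rather than the reverse, but this is immaterial since non-negativity of KL holds regardless.
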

\begin{proof}
Fixing $\mathbf{h}\in\mathcal{H}$, we can take expectation with respect to $p(S|\mathbf{h})$ in \eqref{eq:2ratio-bis} to obtain:
\begin{align}
    \mathop{\mathbb{E}}_{S\sim p(S|\mathbf{h})}&\left[\widetilde{\textup{CE}}_p(\mathbf{h})-\textup{CE}_{S}(\mathbf{h})\right]=\frac{\textup{KL}(p(S|\mathbf{h})\|p(S))-\textup{KL}(p(S|\mathbf{h})\|q(S|\mathbf{h}))}{n}\leq\frac{\textup{KL}(p(S|\mathbf{h})\|p(S))}{n}.
\end{align}
Finally taking expectation with respect to $p(\mathbf{h})$  the lemma is proved.
\end{proof}

In \cite[Theorem 1]{Xu_Raginsky_2017} it is reported that when the loss funciton (the same at training and testing) is $\sigma$-subgaussian for all $\mathbf{h}\in{\mathcal{H}}$, the expectation of the generalization error can be bounded $\sqrt{\frac{2\sigma^2}{n}I(S;\mathbf{h})}$. In our case, training and testing loss functions are not same. Besides that, we are not considering $|\widetilde{\textup{CE}}_p(\mathbf{h})-\textup{CE}_{S}(\mathbf{h})|$ but $\widetilde{\textup{CE}}_p(\mathbf{h})-\textup{CE}_{S}(\mathbf{h})$ instead. However, we obtain a faster scaling and our result does not depend on any assumption on the distribution of the loss functions we are considering both at training and testing.

%Lemma \ref{lem:esperanzaesinformacion} is related to \cite[Theorem 1]{Xu_Raginsky_2017}, where the expectation of the generalization error is bounded by $\sqrt{\frac{2\sigma^2}{n}I(S;\mathbf{h})}$ for a loss function $\sigma$-subgaussian for all $\mathbf{h}\in{\mathcal{H}}$. Besides the better scaling ($n^{-1}$ instead of $n^{-1/2}$), the most important feature of the PACMAN analysis is that you do not need the hypothesis of subgaussianity which is not always true for the negative log-loss (it depends of worst case of $\mathbf{h}$).

\subsection{About the likelihood ratio $\frac{1}{n}\log\frac{q(S|\mathbf{h})}{p(S)}$}
\label{sec:LR}
%%%%%%%%%%%%%%%%%%%%%%%%%%%%%%%%%%%%%%%%%%%%%%
%Until now, we have shown how the PACMAN analysis can conceptually recover the classic results from the literature. In this section we will discuss the potential of finding different results, based on the fact that the PACMAN metric \eqref{eq:ratio} is essentially a likelihood ratio. The following theorem is an example of this situation.

In this section we will explore equation \eqref{eq:ratio} and show, among other thing how the problem of analyzing $\widetilde{\textup{CE}}_p(\mathbf{h})-\textup{CE}_{S}(\mathbf{h})$ is related with a Neyman-Pearson hypothesis testing problem \cite{cover}.
\begin{theorem}\label{thm:ratio}
	For every $\delta\in(0,1]$, we have	
	\begin{equation}\label{eq:teochernoff}
	\mathop{\mathbb{P}}_{\substack{S\sim p(S)\\\mathbf{h}\sim{p(\mathbf{h}|S)}}}\left(\widetilde{\textup{CE}}_p(\mathbf{h})\leq\textup{CE}_{S}(\mathbf{h})+\inf_{t>0}\frac1t\log\frac{C_n(t)}{\delta}\right)\geq1-\delta,
	\end{equation}
	where $\displaystyle C_n(t)=\mathop{\mathbb{E}}_{\substack{S\sim p(S)\\\mathbf{h}\sim p(\mathbf{h}|S)}}\left[\left(\frac{q(S|\mathbf{h})}{p(S)}\right)^{t/n}\right]$.
\end{theorem}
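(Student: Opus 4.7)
The plan is to apply a Chernoff-style (exponential Markov) bound directly to the likelihood ratio representation supplied by equation~\eqref{eq:ratio}. Starting from the identity
\begin{equation*}
\widetilde{\textup{CE}}_p(\mathbf{h})-\textup{CE}_{S}(\mathbf{h})=\frac{1}{n}\log\frac{q(S|\mathbf{h})}{p(S)},
\end{equation*}
fix any threshold $\epsilon>0$. For any $t>0$, the event $\widetilde{\textup{CE}}_p(\mathbf{h})-\textup{CE}_{S}(\mathbf{h})>\epsilon$ is equivalent to $\bigl(q(S|\mathbf{h})/p(S)\bigr)^{t/n}>e^{t\epsilon}$, since $x\mapsto x^{t/n}$ is monotone increasing on $(0,\infty)$.

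Next, I apply Markov's inequality under the joint law $p(S)\,p(\mathbf{h}|S)$, which is precisely the probability measure appearing on the left-hand side of the theorem statement. This gives
\begin{equation*}
\mathop{\mathbb{P}}_{\substack{S\sim p(S)\\\mathbf{h}\sim p(\mathbf{h}|S)}}\left(\widetilde{\textup{CE}}_p(\mathbf{h})-\textup{CE}_{S}(\mathbf{h})>\epsilon\right)
\leq e^{-t\epsilon}\mathop{\mathbb{E}}_{\substack{S\sim p(S)\\\mathbf{h}\sim p(\mathbf{h}|S)}}\!\left[\left(\frac{q(S|\mathbf{h})}{p(S)}\right)^{t/n}\right]
= e^{-t\epsilon}\,C_n(t).
\end{equation*}

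Finally, I equate the right-hand side to the confidence level $\delta$ and solve for the smallest admissible $\epsilon$. The equation $e^{-t\epsilon}C_n(t)=\delta$ yields $\epsilon=\frac{1}{t}\log\bigl(C_n(t)/\delta\bigr)$, so the complementary event holds with probability at least $1-\delta$ for this choice. Since the argument is valid for every $t>0$, I can take the infimum over $t>0$ to obtain the sharpest bound of this Chernoff form, which is exactly the statement of the theorem.

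There is essentially no technical obstacle: the argument is a one-line Markov inequality once the likelihood-ratio identity \eqref{eq:ratio} is in hand. The only bookkeeping worth flagging is that the expectation defining $C_n(t)$ must be taken against the joint law $p(S)\,p(\mathbf{h}|S)$ (and not, for instance, the product $p(S)\,p(\mathbf{h})$), so that the Markov step legitimately controls the probability appearing in the theorem. No regularity or boundedness assumptions on the loss are required, which is why this bound remains meaningful even though $-\log(\mathbf{h}(x))_y$ is unbounded.
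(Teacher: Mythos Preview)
Your argument is correct and matches the paper's own proof essentially line for line: apply the Chernoff/Markov inequality to the likelihood-ratio identity \eqref{eq:ratio} to obtain $\mathbb{P}\bigl(\widetilde{\textup{CE}}_p(\mathbf{h})-\textup{CE}_{S}(\mathbf{h})>\epsilon\bigr)\le e^{-t\epsilon}C_n(t)$, set the right-hand side equal to $\delta$, solve for $\epsilon$, and optimise over $t>0$. Your extra remarks about the joint law and the monotonicity of $x\mapsto x^{t/n}$ just make explicit what the paper leaves implicit.
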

It is easy to see from \eqref{eq:ratio} that $C_n(t)$ is the moment-generating function (MGF) of the generalization metric $\widetilde{\textup{CE}}_p(\mathbf{h})-\textup{CE}_{S}(\mathbf{h})$. 

\begin{proof}
	Using Chernoff inequality, for every $t>0$:
	\begin{equation}\label{eq:chernoff}
	\mathop{\mathbb{P}}_{\substack{S\sim p(S)\\\mathbf{h}\sim{p(\mathbf{h}|S)}}}\left(\widetilde{\textup{CE}}_p(\mathbf{h})-\textup{CE}_{S}(\mathbf{h})>\epsilon\right)\leq e^{-t\epsilon}C_n(t).
	\end{equation}	
	The result is obtained calling $\delta=e^{-t\epsilon}C_n(t)$ and solving.
\end{proof}

\begin{remark} PACMAN generalization metric \eqref{eq:ratio} can be re-written using $C_n(n)$ as:
\begin{equation}\label{eq:ratioconcn}
\widetilde{\text{CE}}_p(\mathbf{h})-\text{CE}_{S}(\mathbf{h})=\frac{1}{n}\log\frac{q(S|\mathbf{h})}{p(S)}=\frac{1}{n}\log\frac{\tilde{q}(S,\mathbf{h})}{p(S)p(\mathbf{h}|S)}+\frac{1}{n}\log C_n(n),
\end{equation}
where $\tilde{q}(S,\mathbf{h})$ is a probability law defined as:
\begin{equation}
\tilde{q}(S,\mathbf{h})=\frac{q(S|\mathbf{h})p(\mathbf{h}|S)}{C_n(n)}.   
\end{equation}
The likelihood ratio introduced in \eqref{eq:ratioconcn} is related to a Neyman-Pearson hypothesis testing over the random variables $(S,\mathbf{h})$ \cite[Theorem 11.7.1]{cover}. Consider the following hypothesis test:
\begin{equation}
H_0:\;p(S)p(\mathbf{h}|S),\ \ \ \ \ H_1:\tilde{q}(S,\mathbf{h})
\label{eq:test1}
\end{equation}
It is well known that the uniformly most powerful test for the above testing problem is:
\begin{equation}
\widetilde{\text{CE}}_p(\mathbf{h})-\text{CE}_{S}(\mathbf{h})-\frac1n\log C_n(n)\underset{H_0}{ \overset{H_1}{\gtrless}}\epsilon_\alpha.
\label{eq:test2}    
\end{equation}
In this context the associated type I error $\alpha$ is:
%The uniformly most powerful test associated to the hypothesis $H_0:\;p(S)p(\mathbf{h}|S)$ and $H_1:\;\tilde{q}(S,\mathbf{h})$ is to compare $\widetilde{\text{CE}}_p(\mathbf{h})-\text{CE}_{S}(\mathbf{h})-\frac1n\log C_n(n)$ with a threshold $\epsilon_\alpha>0$. In this context the type I error $\alpha$ associated with this result is:
\begin{equation}
\alpha=\mathop{\mathbb{P}}_{\substack{S\sim p(S)\\\mathbf{h}\sim{p(\mathbf{h}|S)}}}\left(\widetilde{\textup{CE}}_p(\mathbf{h})>\textup{CE}_{S}(\mathbf{h})+\frac1n\log{C_n(n)}+\epsilon_\alpha\right),
\end{equation}
which matches with Theo. \ref{thm:ratio} for $t=n$ and $\epsilon_\alpha=-\frac1n\log\delta$; and these choices guarantee $\alpha\leq\delta$. %A particular case of the Theo. \ref{thm:ratio} bound has associated an optimal hypothesis testing for the above mentioned $H_0$ and $H_1$.
\end{remark}

%Theo. \ref{thm:ratio} introduce the complexity metric $\displaystyle\left(\inf_{t>0}\frac1t\log\frac{C_n(t)}{\delta}\right)$. As this result could not be contrasted with the bibliography (for not finding any similar), it is desirable at least to show that this metric decay as $n^{-1}$. Next we will present three examples where this scaling is easy to prove.  
In the following we present three special settings where the quantity $\displaystyle\left(\inf_{t>0}\frac1t\log\frac{C_n(t)}{\delta}\right)$ can be bounded from above and an explicit dependence of the bound with the size $n$ (typically $n^{-1}$) of the training sample can be computed.

\begin{example}[Finite hypothesis space]\label{cor:finito}
A simple case to analyze is when ${\mathcal{H}}$ is finite. In this case, using that  $p(\mathbf{h}|S)\leq1$, we obtain $C_n(n)\leq|{\mathcal{H}}|$. Then, choosing $t=n$ the following bound is obtained immediately:
	\begin{equation}
	\mathop{\mathbb{P}}_{\substack{S\sim p(S)\\\mathbf{h}\sim{p(\mathbf{h}|S)}}}\left(\widetilde{\textup{CE}}_p(\mathbf{h})\leq\textup{CE}_{S}(\mathbf{h})+\frac1n\log\frac{|{\mathcal{H}}|}{\delta}\right)\geq1-\delta.
	\end{equation}
%As in classical theory, this result suggests that algorithms with less complexity will be more robust to overfitting.
This is result is in line with several well-known classic results \cite{Shalev-Shwartz_Ben-David_2014} where suggest that less complex models will be more robust to overfitting. 
\end{example}

\begin{example}[Subgaussian hypothesis on the generalization metric]\label{ex:subgaussian}
In Sections \ref{sec:renyi} and \ref{sec:little} we show that the scaling on the generalization gap $\widetilde{\textup{CE}}_p(\mathbf{h})-\textup{CE}_{S}(\mathbf{h})$ is $n^{-1}$ (with high probability). In this example we will further assume that $n\left(\widetilde{\textup{CE}}_p(\mathbf{h})-\textup{CE}_{S}(\mathbf{h})\right)$ is $\sigma$-subgaussian. That is,
\begin{equation}
\log\mathop{\mathbb{E}}_{\substack{S\sim p^n\\\mathbf{h}\sim p_{\mathbf{h}|S}}}\left[e^{\lambda n\left(\widetilde{\textup{CE}}_p(\mathbf{h})-\textup{CE}_{S}(\mathbf{h})\right)}\right]\leq\lambda n\mathop{\mathbb{E}}_{\substack{S\sim p^n\\\mathbf{h}\sim p_{\mathbf{h}|S}}}\left[\widetilde{\textup{CE}}_p(\mathbf{h})-\textup{CE}_{S}(\mathbf{h})\right]+\frac{\lambda^2\sigma^2}{2},\qquad\forall\;\lambda\in\mathbb{R}.
\end{equation}
For $\lambda>0$, the first expectation can be related with $C_n(t)$ using \ref{eq:ratio}:
\begin{equation}
C_n(\lambda n)=\mathop{\mathbb{E}}_{\substack{S\sim p^n\\\mathbf{h}\sim p_{\mathbf{h}|S}}}\left[e^{\lambda n\left(\widetilde{\textup{CE}}_p(\mathbf{h})-\textup{CE}_{S}(\mathbf{h})\right)}\right].
\end{equation}
Then we can write:
	\begin{align}
	\inf_{t>0}\frac1t\log\frac{C_n(t)}{\delta}&=	\inf_{\lambda>0}\frac{1}{\lambda n}\left[\log C_n(\lambda n)+\log(1/\delta)\right]\\
	&\leq\mathop{\mathbb{E}}_{\substack{S\sim p^n\\\mathbf{h}\sim p_{\mathbf{h}|S}}}\left[\widetilde{\textup{CE}}_p(\mathbf{h})-\textup{CE}_{S}(\mathbf{h})\right]+\inf_{\lambda>0}\frac{\lambda\sigma^2}{2n}+\frac{\log(1/\delta)}{\lambda n}\\
	&\leq\frac{I(S;\mathbf{h})}{n}+\frac{1}{n}\inf_{\lambda>0}\frac{\lambda\sigma^2}{2}+\frac{\log(1/\delta)}{\lambda},
	\end{align} 
where we have used Lemma \ref{lem:esperanzaesinformacion}. Using that $ax+\frac{b}{x}\geq2\sqrt{ab}$ for $a,b,x>0$ (with equality iff $x=\sqrt{b/a}$): 	
	\begin{equation}
	\mathop{\mathbb{P}}_{\substack{S\sim p(S)\\\mathbf{h}\sim{p(\mathbf{h}|S)}}}\left(\widetilde{\textup{CE}}_p(\mathbf{h})\leq\textup{CE}_{S}(\mathbf{h})+\frac{I(S;\mathbf{h})}{n}+\frac{1}{n}\sqrt{2\sigma^2\log(1/\delta)}\right)\geq1-\delta.
	\end{equation}
{This result, as Theo. \ref{thm:little} and Theo. \ref{thm:bassily} or \cite[Theorem 1]{Xu_Raginsky_2017} reflect the importance of $I(S;\mathbf{h})$ . However, in this example, although the scaling in the mutual information term is $n^{-1}$ as in Theo. \ref{thm:little}, the dependence with respect to $\delta$ is better behaved. This is a consequence of the $\sigma$-subgaussianity assumption. }
\end{example}

\begin{example}[Regularity conditions on the hypothesis class]\label{sec:regulairtyconditions} 
If we assume that there exists a $\mathbf{h}_0\in{\mathcal{H}}$ such that $({\mathbf{h}_0}(x))_{y}=\frac{1}{|\mathcal{Y}|}$ for all $(x,y)\in\mathcal{X}\times\mathcal{Y}$, the likelihood ratio in the definition of $C_n(t)$ can be bounded as:
\begin{equation}\label{eq:MLRdef}
\frac{q(S|\mathbf{h})}{p(S)}\leq\frac{\displaystyle\sup_{\mathbf{h}\in{\mathcal{H}}}q(S|\mathbf{h})}{\displaystyle q(S|\mathbf{h}_0)}=\frac{\displaystyle\sup_{\mathbf{h}\in{\mathcal{H}}}q(S|\mathbf{h})}{\displaystyle p(S)}.
\end{equation}
The numerator can be seen as the result of a likelihood maximization over the hypothesis space $\mathcal{H}$. Maximum likelihood estimators have a Gaussian behavior under some regularity conditions for the family of distributions $q(S|\mathbf{h}),\ \mathbf{h}\in\mathcal{H}$ \cite[Theo. 10.3.3]{CaseBerg:01}. Most of them apply to many common parametric families of distributions. However, one of the fundamental hypotheses is identifiability \cite{Vaart_2000} , which is not necessarily true in typically overparametrized deep networks. However, in many parametric settings this hypothesis is fullfilled. In this example we assume that the regularity conditions hold for $q(S|\mathbf{h})$ with $\mathbf{h}\in{\mathcal{H}}$.

Let the parametric space ${\mathcal{H}}=\left\{\mathbf{h}_\theta:\;\theta\in\Theta\right\}$ with $\Theta\subset\mathbb{R}^{\nu}$. Under appropriate regularity conditions, maximum likelihood ratio $X=2\log\frac{\displaystyle \sup_{\mathbf{h}\in{\mathcal{H}}}q(S|\mathbf{h})}{p(S)}$ has an asymptotically squared-chi distribution of $\nu$ degree freedom (the number of trainable parameters). Then $C_n(t)\leq\mathbb{E}_{X\sim \chi^2}\left[e^{\frac{Xt}{2n}}\right]$ is bounded by the squared-chi MGF evaluated at $\frac{t}{2n}$. For $0<t<n$, the squared-chi distribution has a MGF of $\left(1-\frac{t}{n}\right)^{-\frac{\nu}{2}}$. That is:
	\begin{equation}
\frac{C_n(t)}{\left(1-\frac{t}{n}\right)^{-\frac{\nu}{2}}}\leq\frac{\displaystyle\mathop{\mathbb{E}}_{S\sim p(S)}\left(\frac{\displaystyle \sup_{\mathbf{h}\in{\mathcal{H}}}q(S|\mathbf{h})}{p(S)}\right)^{t/n}}{\left(1-\frac{t}{n}\right)^{-\frac{\nu}{2}}}\overset{n\rightarrow\infty}{\longrightarrow} 1,
\end{equation}  
i.e. $C_n(t)\lesssim\left(1-\frac{t}{n}\right)^{-\frac{\nu}{2}}$, where $\lesssim$ refers to the asymptotic behaviour ($a_n\lesssim b_n$ if for all $\epsilon>0$ there exists $n_0$ such that for all $n\geq n_0$, $a_n\leq b_n+\epsilon$). We can write:
	\begin{align}
	\inf_{0<t<n}\frac1t\log\left[\frac{\left(1-\frac{t}{n}\right)^{-\frac{\nu}{2}}}{\delta}\right]&=\frac{1}{n}\inf_{0<\lambda<1}\frac1\lambda\log\left[\frac{\left(1-\lambda\right)^{-\frac{\nu}{2}}}{\delta}\right]\\
	&=\frac{g(\delta,\nu)}{n}.
	\end{align} 
This show that, under the assumptions taken, a scaling $n^{-1}$ is achievable and that the PACMAN bound is dominated by the number of parameters. It is also interesting to see the expected value behaviour: 
\begin{equation}
2n\mathop{\mathbb{E}}_{\substack{S\sim p(S)\\\mathbf{h}\sim{p(\mathbf{h}|S)}}}\left[\widetilde{\textup{CE}}_p(\mathbf{h})-\textup{CE}_{S}(\mathbf{h})\right]\leq\mathop{\mathbb{E}}_{S\sim p(S)}\left[2\log\frac{\displaystyle \sup_{\mathbf{h}\in{\mathcal{H}}}q(S|\mathbf{h})}{p(S)}\right]\overset{n\rightarrow\infty}{\longrightarrow}\nu,
\end{equation}
where also a $n^{-1}$ scaling is also achievable. In this case the bound $\mathop{\mathbb{E}}\left[\widetilde{\textup{CE}}_p(\mathbf{h})-\textup{CE}_{S}(\mathbf{h})\right]\lesssim\frac{\nu}{2n}$, depends almost trivially on the parametric model used through the total number of parameters $\nu$.

\end{example}
\section{Conclusion}\label{sec:conclution}

In this work we analyzed using a point-wise PAC analysis the term $\widetilde{\textup{CE}}_p(\mathbf{h})-\textup{CE}_{S}(\mathbf{h})$, so called PACMAN inequalities. This metric explicitly considers the mismatch between the accuracy, which is of interest during the testing phase, and the negative log-loss function, whose empirical counterpart is typically as cost function during training.  Several PACMAN bounds where obtained and some discussions about them at their links with previous results in the literature were provided. Interestingly, our analysis exploit the fact that $\widetilde{\textup{CE}}_p(\mathbf{h})-\textup{CE}_{S}(\mathbf{h})$ can be expressed as a likelihood ratio. From this observation and using a simple Chernoff bound we obtained several bounds that  depend on information theoretic metrics like mutual information and R\'enyi $\alpha$-divergence. For instance, Corollary \ref{cor:1} shows that the terms $H(\mathbf{h})$ and $\textup{CS}(S)$ defined in \eqref{eq:H-CSdefs}, control the  PAC point-wise bound for $\widetilde{\textup{CE}}_p(\mathbf{h})-\textup{CE}_{S}(\mathbf{h})$. Theo. \ref{thm:little} shows that $I(S;\mathbf{h})$ also controls the above mentioned metric. This is related with Theo. \ref{thm:bassily}, where differently that in the case considered in this work, the loss function is the same at the training and testing phases. Finally, in Theo. \ref{thm:ratio} a bound based on quantity $C_n(t)$ is obtained. As this quantity is not so well-known as other information theoretic quantities, some three simple examples were developed where this quantity can be easily bounded in terms on different types of structural information about the hypothesis space $\mathcal{H}$. It is important to emphasizes, that differently to some other similar approaches considered in the literature, all the bounds presented in this works presents a scaling with the training set of $n^{-1}$. 

\section*{Acknowledgment}
This project has received funding %from the European Union’s Horizon 2020 research and innovation programme under the Marie Skłodowska-Curie grant agreement No 792464, 
from CONICET under grant PIP11220150100578CO, and from University of Buenos Aires under grant UBACyT 20020170100470BA. 
%Insert the Acknowledgment text here.

% can use a bibliography generated by BibTeX as a .bbl file
% BibTeX documentation can be easily obtained at:
% http://www.ctan.org/tex-archive/biblio/bibtex/contrib/doc/

\bibliographystyle{IMAIAI} %NO ME ANDA
\bibliography{mybibfile}
%
% once the .bbl file has been generated then place the text in your article.

% To get the numbered reference style the author should use [numbib]
%as an option in the document class.  For example: \documentclass[numbib]{imaiai}
\appendix

\section{Appendix}

\subsection{About the hypothesis $R_p^\ast(h)\lesssim R_p(\mathbf{h})$}\label{sec:hardsoft}

In this appendix we show how the hypothesis of $R_p^\ast(h)\lesssim R_p(\mathbf{h})$ is totally reasonable in practice. Both expected hard-risk $R_p^\ast({h})$ and expected soft-risk $R_p(\mathbf{h})$ are minimized with a hard decision: the famous Bayesian classifier \cite[Ex. 2.11]{Duda}. This suggest that a classifier with low risk $R_p(\mathbf{h})$ will have $R_p^\ast(h)\approx R_p(\mathbf{h})$. 
%but $R_p^\ast(h)$ can be lower and $R_p(\mathbf{h})$ not. It is clear that if the hypothesis $R_p^\ast(h)\leq R_p(\mathbf{h})$ was not true, people would not be left with only the maximum value.
It is well known that combining cross-entropy minimization and a soft-max final layer tends to concentrate the values of $\mathbf{h}(x)$ \cite[Sec. 6.2.2]{Goodfellow-et-al-2016} and this effect is exacerbated by large neural networks and current regularization methods, as batch normalization or weight decay \cite{guo17}. Therefore, when those algorithm are used the two risks should be closer to each other. In order to empirically test this, in Table \ref{tabla:FCFM} we show the different metric values (mean and standard deviation over $20$ independent simulations) for a $3$- layers fully connected deep neural network (two layers with the same hidden units with ReLU activation and a third softmax layer) trained with FashionMNIST dataset \cite{xiao2017/online} for different hidden units. Both risks are always very close and the hypothesis $R_p^\ast(h)\leq R_p(\mathbf{h})$ holds. Next, we present a theoretical result that gives some conditions for $R_p^\ast(h)\leq R_p(\mathbf{h})$ to hold exactly.

In this section we will study this phenomenon using the following result. 

\begin{theorem}\label{teo:hardsoft}
	Soft and hard risk are related by
\begin{equation}
\frac{1}{2}R_p^\ast(h)\leq R_p(\mathbf{h})\leq1-c_\mathbf{h}+R_p^\ast(h),
\end{equation}		
where $c_\mathbf{h}$ is the expected confidence of the decision:
	\begin{equation}
	c_\mathbf{h}=\mathop{\mathbb{E}}_{x\sim p(x)}\left[(\mathbf{h}(x))_{h(x)}\right].
	\end{equation}	
In addition, 
\begin{enumerate}
	\item If $R_p^\ast(h)\leq 1-c_\mathbf{h}$ then $R_p^\ast(h)\leq R_p(\mathbf{h})$.
	\item If $c_\mathbf{h}=1$ then $R_p^\ast(h)=R_p(\mathbf{h})$.
	%\item The expected confidence is bounded $1-R_p(\mathbf{h})\leq c_\mathbf{h}\leq 1$.
\end{enumerate}
\end{theorem}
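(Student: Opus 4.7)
The plan is to prove everything pointwise in $x$, then integrate against $p(x)$. Let me introduce the shorthand $a=p(h(x)\mid x)$ and $b=(\mathbf{h}(x))_{h(x)}$, noting that by definition $h(x)=\arg\max_{y} (\mathbf{h}(x))_y$, so $(\mathbf{h}(x))_y\le b$ for every $y$, and $(\mathbf{h}(x))_y\le 1-b$ whenever $y\neq h(x)$ (since all coordinates are non-negative and sum to one). Then $R_p(\mathbf{h})=\mathbb{E}_{x}[1-\sum_y p(y\mid x)(\mathbf{h}(x))_y]$, $R_p^\ast(h)=\mathbb{E}_{x}[1-a]$, and $c_\mathbf{h}=\mathbb{E}_{x}[b]$.

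For the \emph{upper bound}, start from the trivial lower estimate $\sum_y p(y\mid x)(\mathbf{h}(x))_y\ge ab$ and combine it with $(1-a)(1-b)\ge 0$, which rearranges to $ab\ge a+b-1$. Thus pointwise $1-\sum_y p(y\mid x)(\mathbf{h}(x))_y\le 2-a-b$, and integrating over $x$ gives $R_p(\mathbf{h})\le 1-c_\mathbf{h}+R_p^\ast(h)$.

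The \emph{lower bound} $\tfrac12 R_p^\ast(h)\le R_p(\mathbf{h})$ is the substantive part and I expect it to be the main obstacle, because a single naive estimate does not close the gap; a case split on the confidence $b$ is required. I would prove pointwise that $\sum_y p(y\mid x)(\mathbf{h}(x))_y\le\tfrac{1+a}{2}$ as follows. If $b\le\tfrac12$, then every coordinate of $\mathbf{h}(x)$ is at most $\tfrac12$, hence $\sum_y p(y\mid x)(\mathbf{h}(x))_y\le b\le\tfrac12\le\tfrac{1+a}{2}$. If $b>\tfrac12$, use $(\mathbf{h}(x))_y\le 1-b$ for $y\neq h(x)$ to obtain
\begin{equation}
\sum_y p(y\mid x)(\mathbf{h}(x))_y\le ab+(1-a)(1-b)=1-a-b+2ab,\nonumber
\end{equation}
and I need to show $1-a-b+2ab\le\tfrac{1+a}{2}$, i.e.\ $\phi(a,b):=1-3a-2b+4ab\le 0$. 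I view $\phi(a,b)$ as a linear function of $b$ with slope $4a-2$; if $a\ge\tfrac12$ it is non-decreasing in $b$ so it suffices to check $b=1$, yielding $\phi(a,1)=a-1\le 0$, while if $a<\tfrac12$ it is strictly decreasing in $b$ on $(\tfrac12,1]$ so it suffices to check $b=\tfrac12$, yielding $\phi(a,\tfrac12)=-a\le 0$. Both regimes give $\phi\le 0$, closing the pointwise inequality; integrating over $x$ yields the claim.

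Finally, for the two addenda I would argue briefly. Pointwise $\sum_y p(y\mid x)(\mathbf{h}(x))_y\le b\cdot\sum_y p(y\mid x)=b$, so integrating gives $R_p(\mathbf{h})\ge 1-c_\mathbf{h}$; item~1 follows immediately by chaining with the hypothesis $R_p^\ast(h)\le 1-c_\mathbf{h}$. For item~2, $c_\mathbf{h}=1$ together with $b\le 1$ almost surely forces $b=1$ almost surely, so $\mathbf{h}(x)$ is a Dirac mass on $h(x)$; therefore $\sum_y p(y\mid x)(\mathbf{h}(x))_y=p(h(x)\mid x)=a$, and taking expectation yields $R_p(\mathbf{h})=R_p^\ast(h)$.
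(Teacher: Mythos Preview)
Your proof is correct. The upper bound and the two addenda match the paper's argument essentially verbatim: the paper writes $(\mathbf{h}(x))_y\ge(\mathbf{h}(x))_{h(x)}-\mathds{1}\{y\neq h(x)\}$ and takes expectation over $(x,y)$, which after conditioning on $x$ is exactly your $\sum_y p(y\mid x)(\mathbf{h}(x))_y\ge a+b-1$; the arguments for items~1 and~2 are identical.

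The lower bound, however, is obtained differently. The paper does \emph{not} work pointwise in $x$; it applies a layer-cake / reverse-Markov estimate directly to the full expectation,
\[
\mathop{\mathbb{E}}_{(x,y)}\big[(\mathbf{h}(x))_y\big]=\int_0^1 \mathop{\mathbb{P}}\big((\mathbf{h}(x))_y>t\big)\,dt\le \tfrac12+\tfrac12\,\mathop{\mathbb{P}}\big((\mathbf{h}(x))_y>\tfrac12\big),
\]
and then uses the set inclusion $\{(\mathbf{h}(x))_y>\tfrac12\}\subset\{y=h(x)\}$ to get $1-R_p(\mathbf{h})\le\tfrac12+\tfrac12(1-R_p^\ast(h))$ in two lines. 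Your route instead establishes the stronger pointwise inequality $\sum_y p(y\mid x)(\mathbf{h}(x))_y\le\tfrac{1+a}{2}$ via a case split on $b\lessgtr\tfrac12$ and a short bilinear analysis of $\phi(a,b)=1-3a-2b+4ab$. Both hinge on the threshold $\tfrac12$, but the paper exploits it through the tail integral while you exploit it through a convexity-in-$b$ argument. The paper's proof is shorter; yours gives a conditional-on-$x$ statement that is strictly stronger than what the theorem requires.
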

\begin{proof}
We start the proof of $R_p^\ast(h)\leq 2R_p(\mathbf{h})$ using the following reverse Markov inequality:
\begin{align}
\mathop{\mathbb{E}}_{(x,y)\sim p(x,y)}\left[(\mathbf{h}(x))_{y}\right]&=\int_{0}^{\frac{1}{2}}\mathop{\mathbb{P}}_{(x,y)\sim p(x,y)}\left((\mathbf{h}(x))_{y}>t\right)dt+\int_{\frac{1}{2}}^{1}\mathop{\mathbb{P}}_{(x,y)\sim p(x,y)}\left((\mathbf{h}(x))_{y}>t\right)dt\\
&\leq\frac{1}{2}+\frac{1}{2}\mathop{\mathbb{P}}_{(x,y)\sim p(x,y)}\left((\mathbf{h}(x))_{y}>\frac{1}{2}\right).
\end{align}

It is easy to see that if there is a vector component greater than $\frac{1}{2}$, then it is the maximum: $\left\{(\mathbf{h}(x))_{y}>\frac{1}{2}\right\}\subset\{y=h(x)\}$. So $1-R_p(\mathbf{h})\leq\frac{1}{2}+\frac{1}{2}(1-R_p^\ast(h))$ and the first inequality is proved. 

For the the inequality $R_p(\mathbf{h})\leq1-c_\mathbf{h}+R_p^\ast(h)$ note that:
\begin{align}
(\mathbf{h(x)})_y&= (\mathbf{h(x)})_{h(x)}\mathds{1}\left\{y=h(x)\right\}+(\mathbf{h(x)})_y\mathds{1}\left\{y\neq h(x)\right\}\\
&=(\mathbf{h(x)})_{h(x)}-\left[(\mathbf{h(x)})_{h(x)}-(\mathbf{h(x)})_{y}\right]\mathds{1}\left\{y\neq h(x)\right\}\\
&\geq(\mathbf{h(x)})_{h(x)}-\mathds{1}\left\{y\neq h(x)\right\}.
\end{align}
Then taking expectation in both sides we get $1-R_p(\mathbf{h})\geq c_\mathbf{h}-R_p^\ast(h)$.

To conclude note that $c_\mathbf{h}\geq1-R_p(\mathbf{h})$ (the expectation of $(\mathbf{h}(x))_y$ is smaller than the expectation of its maximum over $\mathcal{Y}$). So if $R_p^\ast(h)\leq 1-c_\mathbf{h}$ then $R_p^\ast(h)\leq R_p(\mathbf{h})$. Similarly, note that $c_\mathbf{h}=1$ implies $(\mathbf{h}(x))_{y}=\mathds{1}\left\{y=h(x)\right\}$ almost surely, from which $R_p^\ast(h)=R_p(\mathbf{h})$ follows.
\end{proof}

\begin{remark}
Confidence is a magnitude well studied in calibration theory \cite{guo17}. It is said that an algorithm is calibrated if 
\begin{equation}
\mathop{\mathbb{P}}_{(x,y)\sim p(x,y)}\left(h(x)=y |(\mathbf{h}(x))_{h(x)}=p\right)=p
\end{equation}
almost surely, and this implies %(taking expectation term by term) 
that $1-R_p^\ast(h)=c_\mathbf{h}$ 
%(and there are guarantees to $R_p^\ast(h)\leq R_p(\mathbf{h})$)
. In this sense, the difference between these magnitudes can be bounded by the expected calibration error (ECE):
\begin{align}
|1-R_p^\ast(h)-c_\mathbf{h}|&=\left|\mathop{\mathbb{E}}_{(x,y)\sim p(x,y)}\left[\mathds{1}\left\{h(x)=y\right\}-(\mathbf{h}(x))_{h(x)}\right]\right|\\
&\leq\mathop{\mathbb{E}}_{\substack{p=(\mathbf{h}(x^\prime))_{h(x\prime)}\\x^\prime\sim p(x)}}\left[\left|\mathop{\mathbb{P}}_{(x,y)\sim p(x,y)}\left(h(x)=y |(\mathbf{h}(x))_{h(x)}=p\right)-p\right|\right]\\
&\triangleq\text{ECE}.
\end{align}
As it is well-known, neural networks typically present $\text{ECE}\geq 0$ \cite{guo17}.
\end{remark}

In general, we only have than $R_p^\ast(h)\leq 2R_p(\mathbf{h})$, which suggests that minimizing soft risk helps decreasing the hard one. Theo. {teo:hardsoft} shows that this relationship can be strengthened in situations such as calibrated algorithms $R_p^\ast(h)\leq R_p(\mathbf{h})$ and totally concentrated algorithms ($c_\mathbf{h}=1$) where $R_p^\ast(h)=R_p(\mathbf{h})$. %However in other situations, $R_p^\ast(h)\leq 2R_p(\mathbf{h})$ is still true and the results presented in Section \ref{sec:mainresults} hold multiply the bounds by $2$.

\begin{table}[t]
	\centering	
	\begin{tabular}{||c c c c c||} 
		\hline
		Units & $\textup{CE}_{S}(\mathbf{h})$ & $R_p^\ast(h)$ & $R_p(\mathbf{h})$ & $\widetilde{\textup{CE}}_p(\mathbf{h})-\textup{CE}_{S}(\mathbf{h})$ \\ [0.5ex] 
		\hline\hline
4 & 0.509$\pm$0.16 & 0.204$\pm$0.063 & 0.277$\pm$0.07 & -0.179$\pm$0.034 \\
\hline
8 & 0.322$\pm$0.013 & 0.155$\pm$0.0033 & 0.201$\pm$0.0047 & -0.097$\pm$0.0089 \\
\hline
16 & 0.207$\pm$0.011 & 0.142$\pm$0.0029 & 0.168$\pm$0.003 & -0.0228$\pm$0.0097 \\
\hline
32 & 0.0911$\pm$0.012 & 0.136$\pm$0.0039 & 0.145$\pm$0.003 & 0.0657$\pm$0.012 \\
\hline
64 & 0.0425$\pm$0.011 & 0.123$\pm$0.0027 & 0.128$\pm$0.0026 & 0.0944$\pm$0.01 \\
\hline
128 & 0.0305$\pm$0.014 & 0.112$\pm$0.0052 & 0.116$\pm$0.005 & 0.0927$\pm$0.0089 \\
\hline
256 & 0.0343$\pm$0.012 & 0.107$\pm$0.003 & 0.111$\pm$0.0029 & 0.0836$\pm$0.0097 \\
\hline
512 & 0.0299$\pm$0.0083 & 0.104$\pm$0.0027 & 0.108$\pm$0.0027 & 0.0839$\pm$0.0065 \\
\hline
1024 & 0.0263$\pm$0.0081 & 0.102$\pm$0.0036 & 0.106$\pm$0.0036 & 0.0859$\pm$0.0054 \\
\hline
2048 & 0.0305$\pm$0.014 & 0.103$\pm$0.0038 & 0.107$\pm$0.0039 & 0.0823$\pm$0.011 \\
\hline
4096 & 0.0264$\pm$0.01 & 0.103$\pm$0.0031 & 0.106$\pm$0.0029 & 0.0858$\pm$0.0084 \\
\hline
\end{tabular}
\caption{Measures (mean and standard deviation) in a fully connected deep neural network trained with FashionMNIST dataset.}
\label{tabla:FCFM}
\end{table}

\subsection{About the behaviour of  $\widetilde{\textup{CE}}_p(\mathbf{h})-\textup{CE}_{S}(\mathbf{h})$}
\label{sec:signo}

In the case where the cross-entropy gap converges uniformly, i.e. $\displaystyle\sup_{\mathbf{h}\in{\mathcal{H}}}|\text{CE}_{S}(\mathbf{h})-{\text{CE}}_p(\mathbf{h})|\overset{n\rightarrow\infty}{\longrightarrow}0$ with probability one, using Jensen inequality, it can be shown the asymptotic value has negative behaviour $\widetilde{\textup{CE}}_p(\mathbf{h})-\textup{CE}_{S}(\mathbf{h})\leq {\textup{CE}}_p(\mathbf{h})-\textup{CE}_{S}(\mathbf{h})\overset{n\rightarrow\infty}{\longrightarrow}0$. However, in practice the uniform convergence property is not necessarily true for the cross-entropy loss and typical parametric model families.  It is observed that, in typical practical scenarios, $\widetilde{\textup{CE}}_p(\mathbf{h})-\textup{CE}_{S}(\mathbf{h})$ does present a positive bias which motivates the bounds studied in this paper. For example, in the last column of Table \ref{tabla:FCFM} we show the quantity $\widetilde{\textup{CE}}_p(\mathbf{h})-\textup{CE}_{S}(\mathbf{h})$ (mean and standard deviation over $20$ independent simulations) in a $3$- layers fully connected deep neural network for different hidden units. This effect is enhanced in even larger networks such as ResNet \cite{resnet}. It would seem that our bounds are more interesting in over-parameterized models. This is the case of large deep neural networks, in which the models are rich enough to memorize the training data \cite{DBLP:journals/corr/ZhangBHRV16}, i.e. $\text{CE}_{S}(\mathbf{h})\approx0$. However, even if this gap is not positive, it is lower bounded by the empirical result $\widetilde{\textup{CE}}_p(\mathbf{h})-\textup{CE}_{S}(\mathbf{h})\geq-\textup{CE}_{S}(\mathbf{h})$. In this sense, a bound on $|\widetilde{\textup{CE}}_p(\mathbf{h})-\textup{CE}_{S}(\mathbf{h})|$, can be easily related to the bounds obtained in this paper.
%That means that if one wanted to study a bilateral PACMAN inequality, one could relate it to the unilateral results in the following straightforward lemma.
\begin{lemma}
Let $\epsilon(\delta,n,p(\mathbf{h}|S),p(S))$ a PACMAN bound, i.e.
\begin{equation}
\mathop{\mathbb{P}}_{\substack{S\sim p(S)\\\mathbf{h}\sim{p(\mathbf{h}|S)}}}\left(\widetilde{\textup{CE}}_p(\mathbf{h})\leq\textup{CE}_{S}(\mathbf{h})+\epsilon(\delta,n,p(\mathbf{h}|S),p(S))\right)\geq1-\delta.
\end{equation}
Then
\begin{equation}
\mathop{\mathbb{P}}_{\substack{S\sim p(S)\\\mathbf{h}\sim{p(\mathbf{h}|S)}}}\left(|\widetilde{\textup{CE}}_p(\mathbf{h})-\textup{CE}_{S}(\mathbf{h})|> \epsilon(\delta,n,p(\mathbf{h}|S),p(S))\right)\leq\delta+\mathop{\mathbb{P}}_{\substack{S\sim p(S)\\\mathbf{h}\sim{p(\mathbf{h}|S)}}}\bigg(\textup{CE}_{S}(\mathbf{h})> \epsilon(\delta,n,p(\mathbf{h}|S),p(S))\bigg).
\end{equation}
\end{lemma}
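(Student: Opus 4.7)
My plan is to split the two-sided event $\{|\widetilde{\textup{CE}}_p(\mathbf{h})-\textup{CE}_S(\mathbf{h})|>\epsilon\}$ into the two one-sided tails and control each separately. The upper tail $\{\widetilde{\textup{CE}}_p(\mathbf{h})-\textup{CE}_S(\mathbf{h})>\epsilon\}$ is already the complement of the PACMAN event, so by hypothesis it has probability at most $\delta$. The lower tail $\{\textup{CE}_S(\mathbf{h})-\widetilde{\textup{CE}}_p(\mathbf{h})>\epsilon\}$ requires a separate (and easier) argument.

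The key observation for the lower tail is positivity: since $R_p(\mathbf{h})\in[0,1]$, we have $\widetilde{\textup{CE}}_p(\mathbf{h})=-\log(1-R_p(\mathbf{h}))\geq 0$. Therefore on the event $\{\textup{CE}_S(\mathbf{h})-\widetilde{\textup{CE}}_p(\mathbf{h})>\epsilon\}$ we automatically have $\textup{CE}_S(\mathbf{h})>\epsilon$, i.e.\ the inclusion
\begin{equation*}
\bigl\{\textup{CE}_S(\mathbf{h})-\widetilde{\textup{CE}}_p(\mathbf{h})>\epsilon\bigr\}\subseteq\bigl\{\textup{CE}_S(\mathbf{h})>\epsilon\bigr\}
\end{equation*}
holds pointwise. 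Combining this with the hypothesis via a union bound yields
\begin{equation*}
\mathop{\mathbb{P}}\bigl(|\widetilde{\textup{CE}}_p(\mathbf{h})-\textup{CE}_S(\mathbf{h})|>\epsilon\bigr)\leq\delta+\mathop{\mathbb{P}}\bigl(\textup{CE}_S(\mathbf{h})>\epsilon\bigr),
\end{equation*}
which is exactly the claim (with $\epsilon=\epsilon(\delta,n,p(\mathbf{h}|S),p(S))$).

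There is essentially no obstacle here; the only subtlety is remembering that $\widetilde{\textup{CE}}_p(\mathbf{h})\geq 0$ always, which is what lets the lower deviation be controlled solely by the empirical cross-entropy exceeding $\epsilon$. The resulting bound is only informative in the regime where $\epsilon$ is of the same order as (or larger than) the typical value of $\textup{CE}_S(\mathbf{h})$, which matches the discussion in Appendix~\ref{sec:signo}: in the over-parametrized regime where $\textup{CE}_S(\mathbf{h})\approx 0$, the extra term $\mathbb{P}(\textup{CE}_S(\mathbf{h})>\epsilon)$ is negligible and the two-sided bound essentially reduces to the one-sided PACMAN bound.
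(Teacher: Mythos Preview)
Your proof is correct and follows essentially the same approach as the paper: split the two-sided deviation into the upper and lower tails, bound the upper tail by $\delta$ using the PACMAN hypothesis, and control the lower tail via the inclusion $\{\textup{CE}_S(\mathbf{h})-\widetilde{\textup{CE}}_p(\mathbf{h})>\epsilon\}\subseteq\{\textup{CE}_S(\mathbf{h})>\epsilon\}$, which follows from $\widetilde{\textup{CE}}_p(\mathbf{h})\geq 0$. The only cosmetic difference is that the paper writes the split as an equality (the two tails being disjoint) rather than invoking a union bound, but the content is identical.
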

\begin{proof}
It is not hard to see that:
\small
\begin{align}
&\mathop{\mathbb{P}}_{\substack{S\sim p(S)\\\mathbf{h}\sim{p(\mathbf{h}|S)}}}\left(|\widetilde{\textup{CE}}_p(\mathbf{h})-\textup{CE}_{S}(\mathbf{h})|> \epsilon(\delta,n,p(\mathbf{h}|S),p(S))\right)\\
&=\mathop{\mathbb{P}}_{\substack{S\sim p(S)\\\mathbf{h}\sim{p(\mathbf{h}|S)}}}\left(\widetilde{\textup{CE}}_p(\mathbf{h})-\textup{CE}_{S}(\mathbf{h})> \epsilon(\delta,n,p(\mathbf{h}|S),p(S))\right)+\mathop{\mathbb{P}}_{\substack{S\sim p(S)\\\mathbf{h}\sim{p(\mathbf{h}|S)}}}\left(\widetilde{\textup{CE}}_p(\mathbf{h})-\textup{CE}_{S}(\mathbf{h})<- \epsilon(\delta,n,p(\mathbf{h}|S),p(S))\right)\nonumber\\
&\leq\delta+\mathop{\mathbb{P}}_{\substack{S\sim p(S)\\\mathbf{h}\sim{p(\mathbf{h}|S)}}}\left(\textup{CE}_{S}(\mathbf{h})> \epsilon(\delta,n,p(\mathbf{h}|S),p(S))\right),
\end{align}\normalsize
and where we used that $\widetilde{\textup{CE}}_p(\mathbf{h})\geq0$.
\end{proof}

{This results shows that in the important practical setting in which large deep models are used and where typically small empirical cross-entropy is achieved during training, the most important quantity to be analyzed is $\displaystyle \mathop{\mathbb{P}}_{\substack{S\sim p(S)\\\mathbf{h}\sim{p(\mathbf{h}|S)}}}\left(\widetilde{\textup{CE}}_p(\mathbf{h})\leq\textup{CE}_{S}(\mathbf{h})+\epsilon(\delta,n,p(\mathbf{h}|S),p(S))\right)$ which is the main object of study in this paper.}

%For this reason most important tail of the PACMAN bounds is the positive one for algorithms with small empirical cross-entropy.

\subsection{Proof of Lemma \ref{lem:paracomparar}}\label{app:paracomparar}

First note that:
\begin{align}
R_p(\mathbf{h})&=1-e^{-\widetilde{\textup{CE}}_p(\mathbf{h})}\\    
&=1-e^{-{\textup{CE}}_S(\mathbf{h})}+e^{-{\textup{CE}}_S(\mathbf{h})}\left(1-e^{-\left[\widetilde{\textup{CE}}_p(\mathbf{h})-\textup{CE}_{S}(\mathbf{h})\right]}\right)\\
&\leq\left(1-e^{-{\textup{CE}}_S(\mathbf{h})}\right)+e^{-{\textup{CE}}_S(\mathbf{h})}\left[\widetilde{\textup{CE}}_p(\mathbf{h})-\textup{CE}_{S}(\mathbf{h})\right],
\end{align}
where we used the inequality $1-e^{-x}\leq x$ for all $x\in\mathbb{R}$. Finally we can use \eqref{eq:aux_pacman} and write: 
\begin{align}
1-\delta&\leq \mathop{\mathbb{P}}_{\substack{S\sim p(S)\\\mathbf{h}\sim{p(\mathbf{h}|S)}}}\left(\widetilde{\textup{CE}}_p(\mathbf{h})-\textup{CE}_{S}(\mathbf{h})\leq\epsilon(\delta,n,p(\mathbf{h}|S),p(S))\right)\\
&\leq\mathop{\mathbb{P}}_{\substack{S\sim p(S)\\\mathbf{h}\sim{p(\mathbf{h}|S)}}}\left(\frac{R_p(\mathbf{h})-\left(1-e^{-\textup{CE}_{S}(\mathbf{h})}\right)}{e^{-\textup{CE}_{S}(\mathbf{h})}}\leq\epsilon(\delta,n,p(\mathbf{h}|S),p(S))\right).
\end{align}

\subsection{Proof of Theorem \ref{thm:bayes} and Corollary \ref{cor:1}}\label{app:bayes}

We define:
\begin{align}
X_1&=\frac{1}{n}\log\frac{q(S|\mathbf{h})}{p(S|\mathbf{h})},\qquad X_2=\frac{1}{n}\log\frac{p(\mathbf{h}|S)}{p(\mathbf{h})}. 
\end{align}
From \eqref{eq:2ratio} we can write $\widetilde{\text{CE}}_p(\mathbf{h})-\text{CE}_{S}(\mathbf{h})=X_1+X_2$. Let $\alpha\in(0,1)\cup(1,+\infty)$, $\beta\in(1,+\infty)$ and $\delta\in(0,1]$ and consider the following version of Chernoff inequality. For $t>0$: 
\begin{equation}
\mathop{\mathbb{P}}_{X\sim p}\left(X>\epsilon\right)\leq e^{-t\epsilon}\mathop{\mathbb{E}}_{X\sim p}\left[e^{tX}\right]\quad\Rightarrow\quad \mathop{\mathbb{P}}_{X\sim p}\left(X>\frac{1}{t}\log\frac{\displaystyle\mathop{\mathbb{E}}_{X\sim p}\left[e^{tX}\right]}{\delta_0}\right)\leq \delta_0,
\end{equation}
where $\delta_0=e^{-t\epsilon}\mathop{\mathbb{E}}_{X\sim p}\left[e^{tX}\right]$. We apply previous Chernoff inequality to obtain:
\begin{align}
\mathop{\mathbb{P}}_{S\sim p(S|\mathbf{h})}&\left(X_1>\frac{1}{\alpha n}\log\left[\frac{2}{\delta}\mathop{\mathbb{E}}_{S\sim p(S|\mathbf{h})}\left(\frac{q(S|\mathbf{h})}{p(S|\mathbf{h})}\right)^{\alpha}\right]\right)\leq\frac{\delta}{2}\quad\textup{(for each }\mathbf{h})\\
\mathop{\mathbb{P}}_{\substack{\mathbf{h}\sim p(\mathbf{h})\\S\sim p(S|\mathbf{h})}}&\left(X_1>\frac{1}{\alpha n}\log\left[\frac{2}{\delta}\mathop{\mathbb{E}}_{S\sim p(S|\mathbf{h})}\left(\frac{q(S|\mathbf{h})}{p(S|\mathbf{h})}\right)^{\alpha}\right]\right)\leq\frac{\delta}{2}\\
\mathop{\mathbb{P}}_{\mathbf{h}\sim{p(\mathbf{h}|S)}}&\left(X_2>\frac{1}{(\beta-1)n}\log\left[\frac{2}{\delta}\mathop{\mathbb{E}}_{\mathbf{h}\sim p(\mathbf{h}|S)}\left(\frac{p(\mathbf{h}|S)}{p(\mathbf{h})}\right)^{\beta-1}\right]\right)\leq\frac{\delta}{2}\quad\textup{(for each }S)\\
\mathop{\mathbb{P}}_{\substack{S\sim p(S)\\\mathbf{h}\sim{p(\mathbf{h}|S)}}}&\left(X_2>\frac{1}{(\beta-1)n}\log\left[\frac{2}{\delta}\mathop{\mathbb{E}}_{\mathbf{h}\sim p(\mathbf{h}|S)}\left(\frac{p(\mathbf{h}|S)}{p(\mathbf{h})}\right)^{\beta-1}\right]\right)\leq\frac{\delta}{2},
\end{align}
and where $t=\alpha n$, $\delta_0=\frac{\delta}{2}$ for $X_1$ and $t=(\beta-1) n$, $\delta_0=\frac{\delta}{2}$ for $X_2$. Using the \emph{union bound} we can write:
\begin{align}
\mathbb{P}(X_1+X_2>\epsilon_1+\epsilon_2)&\leq\mathbb{P}(\{X_1>\epsilon_1\}\cup\{X_2>\epsilon_2\})\\
&\leq\mathbb{P}(X_1>\epsilon_1)+\mathbb{P}(X_2>\epsilon_2)
\label{eq:PX}
\end{align}
for all $\epsilon_1,\epsilon_2$. In this context, we can apply the aforementioned Chernoff inequalities to each term in (\ref{eq:PX}) to obtain, with probability at least $1-\delta$:
\small
\begin{align}\label{eq:2chernoff}
\widetilde{\text{CE}}_p(\mathbf{h})&\leq\text{CE}_{S}(\mathbf{h})+\frac{1}{\alpha n}\log\left[\frac{2}{\delta}\mathop{\mathbb{E}}_{S\sim p(S|\mathbf{h})}\left(\frac{q(S|\mathbf{h})}{p(S|\mathbf{h})}\right)^{\alpha}\right]+\frac{1}{(\beta-1)n}\log\left[\frac{2}{\delta}\mathop{\mathbb{E}}_{\mathbf{h}\sim p(\mathbf{h}|S)}\left(\frac{p(\mathbf{h}|S)}{p(\mathbf{h})}\right)^{\beta-1}\right]\\
&=\text{CE}_{S}(\mathbf{h})+\frac{1}{\alpha n}\log\left(\frac{2}{\delta}e^{(\alpha-1)D_\alpha(q(S|\mathbf{h})\|p(S|\mathbf{h}))}\right)+\frac{1}{(\beta-1)n}\log\left(\frac{2}{\delta}e^{(\beta-1)D_\beta(p(\mathbf{h}|S)\|p(\mathbf{h}))}\right)\\
&=\text{CE}_{S}(\mathbf{h})+\frac{\alpha-1}{\alpha n}D_\alpha(q(S|\mathbf{h})\|p(S|\mathbf{h}))+\frac{1}{n}D_\beta(p(\mathbf{h}|S)\|p(\mathbf{h}))+\left(\frac{1}{\alpha n}+\frac{1}{(\beta-1)n}\right)\log\frac{2}{\delta},
\end{align}\normalsize
where we have used the identity  $\mathop{\mathbb{E}}_{\mathbf{h}\sim p(\mathbf{h}|S)}\left(\frac{p(\mathbf{h}|S)}{p(\mathbf{h})}\right)^{\beta-1}=\mathop{\mathbb{E}}_{\mathbf{h}\sim p(\mathbf{h})}\left(\frac{p(\mathbf{h}|S)}{p(\mathbf{h})}\right)^{\beta}$. This concludes the proof of Theo. \ref{thm:bayes} 

In order to prove Corollary \ref{cor:1}, we study two particular cases of R\'enyi $\alpha$-divergences ($\alpha=1/2$ and $\alpha=2$), which are related to the Hellinger distance and square chi divergence respectively \cite{gibbsS02}:
	\begin{align}
	\textup{Hel}^2(q(S|\mathbf{h}),p(S|\mathbf{h}))&=2\left(1-e^{-\frac{1}{2}D_{1/2}(q(S|\mathbf{h})\|p(S|\mathbf{h})}\right)=2\left(1-\mathop{\mathbb{E}}_{S\sim p(S|\mathbf{h})}\left[\left(\frac{q(S|\mathbf{h})}{p(S|\mathbf{h})}\right)^{1/2}\right]\right),\\
	\chi^2(p(\mathbf{h}|S)\|p(\mathbf{h}))&=e^{D_2(p(\mathbf{h}|S)\|p(\mathbf{h}))}-1=\mathop{\mathbb{E}}_{\mathbf{h}\sim p(\mathbf{h}|S)}\left[\frac{p(\mathbf{h}|S)}{p(\mathbf{h})}\right]-1.
	\end{align}
Choosing $\alpha=\frac{1}{2}$ and $\beta=2$, Corollary \ref{cor:1} is obtained.

\end{document}